\documentclass[10pt,twocolumn]{article}

\usepackage[margin=0.72in]{geometry}
\usepackage{times}
\usepackage{microtype}
\usepackage{graphicx}
\usepackage{subcaption}
\usepackage{booktabs}
\usepackage{multirow}
\usepackage{makecell}
\usepackage{tabularx}
\usepackage{adjustbox}
\usepackage{amsmath,amssymb,amsthm,bm,mathtools}
\usepackage{algorithm}
\usepackage[noend]{algpseudocode}
\usepackage{xcolor}
\usepackage{xspace}
\usepackage{cuted}
\usepackage{stfloats}
\usepackage{placeins}
\usepackage{balance}
\usepackage{enumitem}
\usepackage{pifont}
\usepackage[numbers,sort&compress]{natbib}
\usepackage[colorlinks=true,allcolors=blue]{hyperref}
\usepackage[capitalize,nameinlink]{cleveref}
\makeatletter
\providecommand{\theHALG@line}{\thealgorithm.\arabic{ALG@line}}
\makeatother

\newcommand{\EpisodeQuerySR}{77.56}


\newcommand{\method}{INTACT\xspace}

\newcommand{\cmark}{\ding{51}}
\newcommand{\xmark}{\ding{55}}
\newcommand{\vect}[1]{\bm{#1}}
\newcommand{\R}{\mathbb{R}}
\newcommand{\best}[1]{{\bfseries\boldmath #1}}

\newtheorem{proposition}{Proposition}
\newtheorem{definition}{Definition}

\title{\vspace{-1.9em}
\texorpdfstring{%
\includegraphics[width=0.40\textwidth]{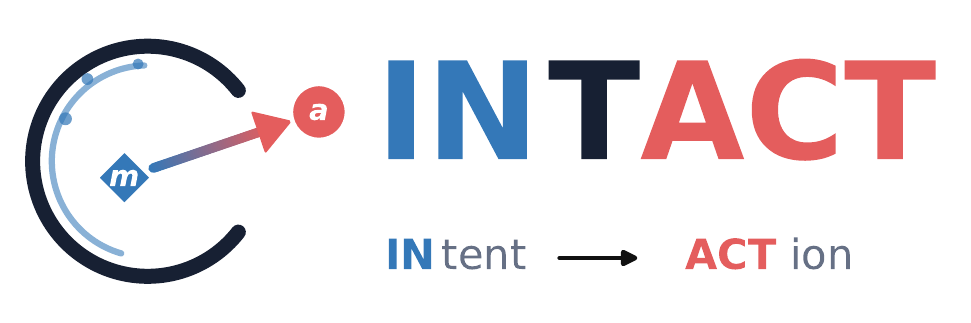}\\[-0.18em]
{\Large\bfseries Isomorphic Intent-to-Action Learning for Search-Free World Models}%
}{INTACT: Isomorphic Intent-to-Action Learning for Search-Free World Models}}

\author{
Junhan Sun$^1$\quad
Hao Zhao$^{2,4,\dagger}$\quad
Guofeng Zhang$^{1,3,\dagger}$\\[0.35em]
$^1$State Key Laboratory of CAD\&CG, Zhejiang University\\[-0.05em]
$^2$Institute for AI Industry Research (AIR), Tsinghua University\\[-0.05em]
\hspace{1.8em}$^3$InSpatio\qquad $^4$RoboParty Lab\\[-0.05em]
{\small $^\dagger$Corresponding authors}\\[0.70em]
\includegraphics[width=0.97\textwidth]{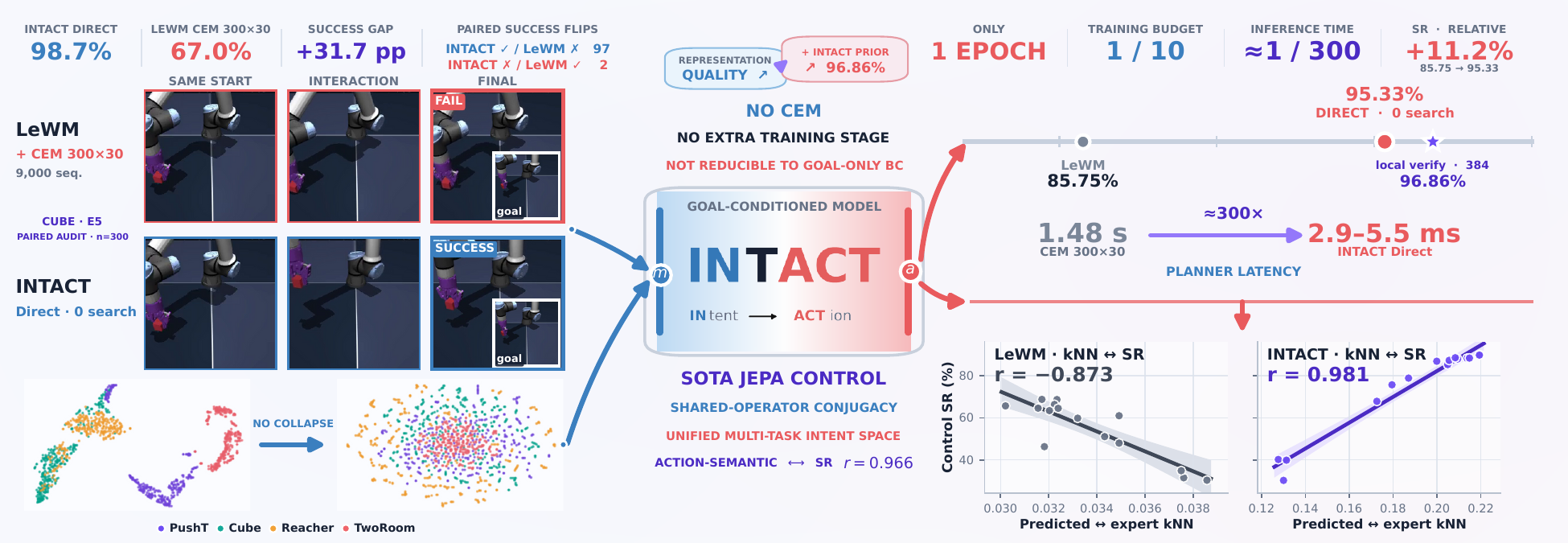}\\[0.30em]
\parbox{0.97\textwidth}{\small\textbf{INTACT turns action-aligned latent intent
into search-free control.}
Left: a fixed E5 Cube audit compares INTACT Direct with LeWM CEM
$300{\times}30$ from 300 matched starts. INTACT achieves 98.7\% versus
67.0\% success ($+31.7$ points), with 97 favorable and only two adverse
success flips. The lower panels show that INTACT prevents representational
collapse and preserves a unified multi-task intent space. Center: one shared
goal-conditioned operator maps latent motion intent directly to an action
chunk, requiring neither CEM nor an additional policy-training stage; its
joint physical and goal calls make the learned interface action-semantic
rather than goal-only behavior cloning. Right: one-epoch INTACT Direct reaches
95.33\% four-task macro SR, while local verification reaches 96.86\%.
Inference latency falls from 1.48\,s for CEM $300{\times}30$ to
2.9--5.5\,ms. Across 15 checkpoints, predicted--expert kNN alignment strongly
tracks INTACT Direct SR ($r=0.981$), whereas the corresponding LeWM--CEM
relation is negative ($r=-0.873$).}
\\[3.70em]
\parbox{0.94\textwidth}{%
\centerline{\normalsize\bfseries Abstract}
\vspace{0.44em}
\fontsize{9.55pt}{12.0pt}\selectfont
Forward latent world models predict how actions change a scene, but recover
actions for a desired change only through expensive test-time search. We
introduce \method (\textbf{IN}tent-\textbf{T}o-\textbf{ACT}ion), an end-to-end
JEPA that turns action-labeled, reward-free trajectories into a deployable
intent-to-action interface. Each transition supplies physical intent
$z_{t+1}-z_t$, while a future goal supplies deployment intent
$\operatorname{sg}(z_g)-z_t$.  The architecture is \textbf{isomorphic} in two
explicit between--and senses: \textbf{isomorphic between the local and goal
motion-intent backbone-input graphs} through an identical four-slot grammar
and shared parameters, and \textbf{isomorphic between the supported local and
goal motion-intent families} through the action-law semantics induced by the
same predictor, rather than pointwise latent equality.  INTACT also realizes
two complementary forms of \textbf{intact} transfer: \textbf{intact between RGB
evidence and latent intent coordinates}, because end-to-end action gradients
retain action-effective information while filtering nuisance unrelated to
motion intent; and \textbf{intact between latent intent families and their
corresponding action-law families}, because the shared predictor learns the
supported family mapping.  Asymmetric endpoint gradients ground physical
successors and fix future goals as anchors, joining representation learning
and control without pointwise latent matching or globally linear dynamics.
The resulting intent coordinates support a robust distributional action law:
its conditional mean serves directly as a \textbf{search-free} policy, while
sampling remains available for diversity or optional verification rather than
being required for goal-directed control.  On the four official LeWM tasks,
one-epoch, zero-search models reach 85.78\%, 100.00\%, 97.67\%, and 97.89\%
success. Optional local CEM
centered on the Direct plan reaches 96.86\% macro success using 384 instead of
9,000 candidate sequences, reducing sampling by 23.44$\times$ while improving
pure CEM by 16.00 points. One shared four-task encoder reaches 89.39\% E5
Direct macro and improves every task over jointly trained LeWM, while
predicted--expert action-family kNN tracks Direct success at $r=0.954$. Direct
inference takes 2.9--5.5\,ms.}
}

\date{}

\begin{document}
\raggedbottom
\onecolumn
\maketitle
\twocolumn

\section{Introduction}
\label{sec:introduction}

Latent world models learn a forward conditional: given a visual state and an
action, predict what happens next. Goal-conditioned control then numerically
inverts this map by proposing action sequences, rolling them through the model,
and retaining those that approach a goal
\citep{hafner2019planet,hansen2024tdmpc2,zhou2025dinowm,maes2026lewm}. This
division creates a representation--control asymmetry. Actions shape the latent
during training, yet CEM or MPPI begins deployment from generic random or
Gaussian actions \citep{deboer2005cem,williams2017mppi}. Proposals become
task-directed only after repeated model rollout, scoring, and refitting. The
world model learns what an action will do, but not which action should realize
a requested latent change. Action and latent are coupled in the forward
predictor but semantically uncalibrated in the inverse direction needed for
control.

The missing supervision is already present in offline trajectories. Our core
observation is that \textbf{even when demonstrations are unordered and training
uses no reward, every action-labelled transition still reveals a
state-conditioned motion intent and the action that realizes it}. Across many
trajectories, these labels reveal families of state-conditional motion intents
and their corresponding action families. Our insight is to represent a desired
motion by a latent intent $m_t$, train
$\pi_\eta(a_t\mid z_t,m_t)$ jointly with the world model, and let observed
actions organize which latent intents require compatible control. The same
intent construction is available at deployment, so the learned representation becomes an
action proposal interface rather than only a scoring space.

\begin{figure*}[t]
\centering
\includegraphics[width=0.99\textwidth]{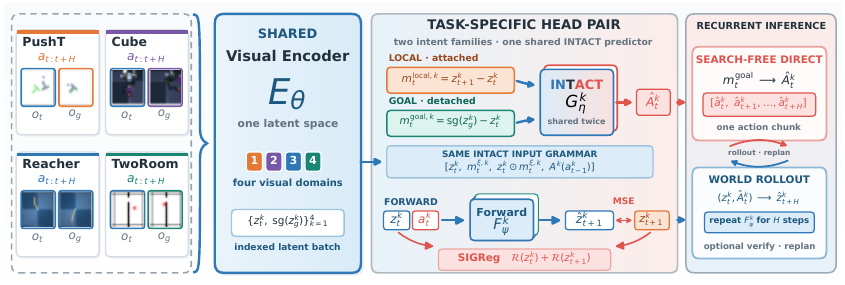}
\caption{\textbf{Shared-encoder INTACT training and recurrent control.}
Four visual domains use one encoder and task-specific predictor pairs. Within
each task, graph-isomorphic local and goal calls enter the same INTACT
Predictor through one matched input grammar, using attached local intent
$z_{t+1}^k-z_t^k$ or detached goal intent
$\operatorname{sg}(z_g^k)-z_t^k$. The Forward
Predictor retains latent-dynamics and SIGReg supervision. At inference, the
goal-conditioned call emits one action chunk for search-free Direct control;
the world rollout is retained only for optional verification and replanning.}
\label{fig:overview}
\end{figure*}

Forward prediction already gives a latent predictive meaning; calling that
latent wholly meaningless would be incorrect. What it does not specify is an
intent-to-action semantics: which compressed distinctions should be immediately
useful for converting a requested motion into control.
In this sense, a forward-only bottleneck behaves like an uncalibrated funnel:
it gathers the information needed to forecast observations while leaving its
action geometry incidental. Intent-conditioned likelihood turns that funnel
into a calibrated filter. Action-equivalent requests are organized for rapid
readout, while the forward objective retains contacts, topology, and visual
residuals needed for future prediction. Representation shaping is therefore
not a cosmetic regularizer; it determines whether the learned state can be
queried directly by a downstream controller.

The name \method encodes three complementary notions of intactness. First,
\emph{representation intactness} asks whether RGB-to-latent compression keeps
the controllable distinctions needed by the intent-to-action conditional. This
is not synonymous with higher rank, more training epochs, or retaining every
pixel detail. Second, \emph{conversion intactness} asks whether relations among
state-conditioned intent families survive their map into the corresponding
action families. Third, \emph{\textbf{isomorphic} intactness} has two levels.
Structurally, the local and goal motion-intent calls use \textbf{isomorphic}
backbone-input graphs: the same typed slots, parameter-labelled predictor, and
action-law codomain. Semantically, their supported motion-intent families are
\textbf{isomorphic} through the conditional action relation induced by that
shared predictor: each conditional quotient class corresponds bijectively to
its realizable action-law image, without requiring local and goal latents to be
pointwise equal. Within either call, the first-order intent
$m_t$ and its state-dependent interaction $z_t\odot m_t$ retain one matched
feature grammar \textbf{intact}. The latter
is a fixed state--intent bilinear feature---a low-cost second-order interaction
in the inputs---not a learned second-order dynamics model. A frozen downstream head can read only what its encoder
already exposes; it cannot reconstruct controllable distinctions discarded
during pretraining. End-to-end action likelihood therefore shapes both what
enters the latent filter and how it is converted into control. In the first
sense, action-relevant intent must survive RGB compression \textbf{intact}; in
the second, the relation between an intent family and its corresponding action
family must survive conversion \textbf{intact}.

This idea is not an additive recipe of two auxiliary losses. Taken alone, the
local branch is physical inverse dynamics; taken alone, the deployment branch
is GCSL-like goal-conditioned imitation. Neither isolated objective expresses
the coupling that INTACT needs. The underlying asymmetry is
\begin{equation}
\underbrace{z_{t+1}}_{\substack{\text{physically grounded}\\
\text{unavailable before acting}}}
\qquad\text{vs.}\qquad
\underbrace{z_g}_{\substack{\text{available before acting}\\
\text{not a one-step successor}}}.
\label{eq:intro_endpoint_asymmetry}
\end{equation}
INTACT resolves it by asking one conditional action operator to interpret both
families, with $m\in\{m^{\rm local},m^{\rm goal}\}$,
\begin{equation}
G_\eta:(z,m,a^-)
\longmapsto p_\eta(a\mid z,m,a^-).
\label{eq:intro_shared_operator}
\end{equation}
The local successor remains attached and supplies physical reachability and
representation shaping; the future goal is a stop-gradient deployment anchor.
The two conditions are never forced together by latent $L_2$, while the
Forward Predictor preserves world information richer than the immediate
action. This shared, graph-isomorphic dual call is the coupling mechanism, not
mere coexistence of two losses.

We therefore treat the deployment condition as a \emph{motion request}, not an
imagined trajectory. Our waypoint attribution uses the horizon-normalized intent
$m_{t,h}^{\rm waypoint}=(\operatorname{sg}(z_g)-z_t)/h$; the final model instead
exposes the raw goal intent
$m_t^{\rm goal}=\operatorname{sg}(z_g)-z_t$. The former imposes an equal-progress
scale, while the latter retains the complete terminal residual and lets the
nonlinear INTACT Predictor choose state-dependent progress. In both cases, the
demonstrated action at the same state supervises the intent, and the same intent
semantics are used at deployment. After their conditions are constructed, the
two calls form \textbf{isomorphic typed predictor graphs}; their upstream
gradient routes remain asymmetric. They do not pretend that a terminal goal is the next
physical frame. This supervision is obtained \emph{without reward labels or
additional semantic annotations; multi-task training uses the known domain
index only to select task-specific heads}.

The resulting geometry is conditional. At fixed current state $z$, two
endpoint conditions are equivalent when they induce the same expert action
law:
\begin{equation}
y\sim_z y'
\quad\Longleftrightarrow\quad
p_E^\star(a\mid z,y)=p_E^\star(a\mid z,y').
\label{eq:intro_quotient}
\end{equation}
Distinct goals may share the same first action before a doorway and separate
after a junction; a real successor and a goal-derived intent may be equivalent
without being close in Euclidean latent space. This quotient applies only to
the action condition. The complete JEPA latent must still preserve contacts,
obstacles, and future visual information required by forward prediction.
We call the resulting equivalence class a \textbf{conditional action quotient}.

We instantiate this view in \method (\textbf{IN}tent-\textbf{T}o-\textbf{ACT}ion
learning). INTACT comprises a \emph{Forward Predictor} and one shared
\emph{INTACT Predictor}. The latter is called with
$m_t^{\rm local}=z_{t+1}-z_t$ to preserve action-recoverable physical change and
with $m_t^{\rm goal}=\operatorname{sg}(z_g)-z_t$ to map a pre-action goal intent
to the demonstrated action block. These calls share feature grammar and
parameters, but not necessarily their numerical distributions or encoder
gradients. At inference, INTACT alternates the INTACT Predictor mean with the
unchanged Forward Predictor and replans from real observations.
Its \textsc{Direct} mode evaluates no candidates and calls no terminal latent
cost; CEM remains available as an optional verifier.

To our knowledge, INTACT is the first end-to-end latent-control architecture to
combine all three ingredients: one shared proper action likelihood for observed
physical-transition and deployable goal-intent condition families, attached
physical-successor endpoints but stop-gradient future-goal anchors, and no
pointwise loss that collapses the two latent condition families. The novelty is
the complete statistical construction rather than inverse dynamics alone. It
aligns the two families through the behavior they induce under the same
conditional action operator, not by making their latent coordinates equal.
The goal call itself is a goal-conditioned imitation objective. Our claim is
not that INTACT lies outside behavior cloning; it is that goal-only,
post-hoc frozen-representation, and independent-actor explanations are
insufficient for the measured behavior of the complete system.
Accordingly, goal-intent-only serves as the GCSL-like ablation in our
factorial; the contribution of full INTACT is the end-to-end coupling of this
deployment likelihood with JEPA dynamics and a shared local physical anchor.
Equivalently, goal-intent-only identifies one familiar branch objective;
Full INTACT begins when the attached physical and stop-gradient deployment
conditions must be explained by the same parameterized action-law operator.

This distinction also rules out a tempting shortcut. Making a physical
successor close to a hand-designed latent waypoint can make a trajectory look straighter while
discarding velocity, contact, or action-identifiable information. INTACT does
not regress a goal intent onto a successor. It uses a probabilistic action likelihood
to preserve the conditional family relation induced by the demonstrated
action.

Nor is this merely pointwise interpolation behavior cloning. The goal intent is
constructed inside the jointly learned world representation, and its likelihood
updates the current-state encoding while the future goal is stop-gradient.
This goal-intent path has no physical-successor gradient; successor-based shaping
is supplied by the separate physical inverse loss in the full objective. The
Forward Predictor remains responsible for rolling the next request. Most importantly, actor-disabled CEM improves after
local/goal-intent training, and conversion-fidelity metrics predict closed-loop SR
across objective designs. The evidence therefore separates representation
shaping, intent-to-action conversion, and optional search rather than assigning
all gains to a terminal policy head.

We deliberately separate the theory and evidence into single-task and
multi-task regimes. In one task, physical inverse and goal-intent supervision are
complementary: the inverse should improve the shared representation even when
its actor is removed, while goal intent should primarily improve the deployed
conditional. With a shared encoder across heterogeneous tasks, however,
individually valid local- and goal-intent gradients can interact. We therefore test
the minimal $2{\times}2$ factorial---neither action loss, physical inverse only,
goal intent only, or both---with one shared encoder and task-specific small heads.
This isolates whether local physical anchoring and deployable goal conditioning
remain complementary under heterogeneous dynamics.

Here ``unified'' has an operational meaning: all four domains update one visual
encoder in one training run, and the same deployment-supported action loss that
produces control also shapes that encoder. This differs from training one world
model per domain or freezing a pretrained LeWM and fitting a downstream
controller or sampling prior. The published LeWM, C-JEPA, Fast-LeWM, GC-IDM,
PRISM, and Qantara control results use task-specific checkpoints or cover only a
subset of the suite. Among the interfaces surveyed in
\cref{tab:positioning}, none previously reports this four-domain joint setting;
our novelty claim is deliberately restricted to that comparison set.

The official four-task LeWM benchmark supports both parts of the account. With
one full-data epoch and no candidate search, the final distance-preserving
single-task model reaches 85.78\%, 100.00\%, 97.67\%, and 97.89\% on PushT,
Cube, Reacher, and TwoRoom, with three training seeds per task. Earlier matched
PushT attribution shows that actor-disabled CEM $30{\times}10$ improves from
42.22\% for LeWM to 57.67\% with inverse supervision, 61.44\% with a matched
goal-intent objective, and $69.44\!\pm\!1.64$\% for the Waypoint INTACT model with
SIGReg $0.03$. Because the actor is disabled, this is functional evidence
beyond the INTACT Predictor proposal; frozen probes provide direct readout
evidence.
Direct uses no candidate search and only 2.9--5.5\,ms in our audited
planner-side measurements; matched PushT is 4.8\,ms. Relative to the
measured actor-initialized CEM $300{\times}30$ average of 1.48\,s, this is about
a $300\times$ planner-latency reduction. It removes a major model-based search
bottleneck for high-frequency control, although it is not an end-to-end VLA
latency benchmark.
Over all 12 final checkpoints, the complete 576-job planner audit finds that
Guarded A, a coherent Direct plan followed by reference-preserving
$128{\times}3$ local raw-action CEM at $\sigma_0=0.25$, reaches 96.86\% macro
and 92.22\% worst-task SR. Relative to matched pure CEM
$300{\times}30$, it gains 16.00 macro points while reducing sampled candidate
sequences from 9,000 to 384 (23.44$\times$). Ordinary $\sigma_0=0.25$ is only
0.08 macro point lower, making it a particularly simple strong default.

For multi-task learning, we use the first complete controlled Math-SDPA matrix:
six objective cells, three training seeds, and three evaluation seeds per
checkpoint. At E1, Full INTACT reaches $42.69\!\pm\!7.19$\% Direct macro SR,
versus $35.75\!\pm\!7.58$\% for matched waypoint-intent-only and
$33.47\!\pm\!3.98$\% for inverse-only. Relative to goal-intent-only, adding physical
inversion raises PushT and TwoRoom by 12.89 and 15.33 points and reduces TwoRoom
training-seed standard deviation from 31.13 to 10.58 points. At E5, one shared
Goal-displacement INTACT encoder reaches
$89.39\!\pm\!0.77$\% macro Direct SR. Relative to the matched shared-encoder
LeWM, it improves all four tasks by 5.66/32.23/12.56/42.44 points; it also
exceeds the published task-specific LeWM macro and its Cube/Reacher scores.
Against the matched three-seed goal-intent-only cell, the physical call adds
$8.78\!\pm\!1.63$ macro points. With every action head disabled, pure CEM
$300{\times}30$ rises from $66.17\!\pm\!2.67$\% for LeWM to
$70.08\!\pm\!1.13$\% for Goal-displacement INTACT; adding Guarded A to the
learned Direct plan further raises $89.39\!\pm\!0.77$\% to
$90.47\!\pm\!0.84$\%. These matched controls separate representation/Forward
usability, direct intent conversion, and optional verification.
Separately, a
45-checkpoint E1--E5 simultaneous-dual-likelihood audit supplies evidence
beyond objective rankings: predicted--expert action-family kNN and CKA have
Pearson $r=0.954$ and $0.897$ with official SR, respectively, compared with
$0.815$ for pointwise action $R^2$. After controlling epoch and cohort, kNN
remains $r=0.902$. This gap indicates that preserving local and global action-family
geometry is more informative than recovering every expert action by a single
point estimate.

Our contributions are:
\begin{enumerate}[leftmargin=*,itemsep=2pt,topsep=3pt]
    \item We define a state-conditional action quotient whose classes are
    canonically isomorphic to the realizable action-law image, represent
    endpoint requests by latent displacement coordinates, and introduce an
    end-to-end \emph{isomorphic implicit-training architecture}: one shared proper likelihood aligns observed-transition and
    deployment-intent condition families, with asymmetric gradient routing and
    without forcing their latent endpoints to coincide. A matched sharing
    ablation shows that two independent actors remain worse even at double
    capacity, directly supporting the shared-operator choice.
    \item We identify the complementary matched intra-call grammar required by
    the INTACT head: the first-order intent $m_t$ must be paired with its matched
    state--intent bilinear interaction $z_t\odot m_t$. The new A--G control
    shows that centering the product alone gives only $+0.89$ points, whereas
    matching the centered main condition and interaction yields a
    $+3.89$-point residual--product interaction with the same direction in all
    three training seeds. This is a state-conditioned inverse-control factor,
    not a second-order dynamics model.
    \item We develop a non-diffusion, autoregressive intent--forward controller
    that emits a state-conditioned action distribution rather than a single
    static answer. It supports zero-search Direct control and plan-centered
    local CEM: the best fixed $128{\times}3$ rule reaches 96.86\% macro and
    92.22\% worst-task SR while using only 4.27\% of the action-step proposals
    used by CEM $300{\times}30$.
    \item We introduce native-coordinate action-family diagnostics---predicted--expert
    CKA and kNN overlap---and show across 45 eligible E1--E5 checkpoints that
    their correlations with official SR ($r=0.897/0.954$) exceed that of
    pointwise action $R^2$ ($r=0.815$); kNN remains $r=0.902$ after controlling
    epoch and cohort, exposing
    family-level intent-to-action preservation as the relevant mechanism. A
    21,600-episode gauge intervention further shows that paired correspondence
    recovers 68.04\% SR versus 9.46\% after pair shuffling, while a Full
    backbone cannot give a LeWM actor an untrained deployment map (5.08\%).
    \item We show that joint intent supervision improves not only the deployed
    actor but also the shared representation: after removing the INTACT
    Predictor at evaluation, matched actor-disabled pure CEM still improves
    over LeWM, with frozen probes independently confirming more recoverable
    state and intervention information.
    \item We give latent space an operational semantics by extending a
    Gaussian-regularized self-supervised predictor into a multi-task mapping
    between motion-intent families and action families. This principle is not
    specific to LeWM's encoder--encoder JEPA: it applies to general learned
    representations whenever action-labelled transitions are available.
\end{enumerate}

\section{Related Work}
\label{sec:related}

\paragraph{Latent world models and JEPAs.}
PlaNet and Dreamer learn recurrent latent dynamics through generative
objectives \citep{hafner2019planet,hafner2023dreamerv3}; TD-MPC2 learns compact
dynamics jointly with control-relevant predictions \citep{hansen2024tdmpc2}.
JEPA-style world models instead predict in representation space. DINO-WM plans
with a frozen DINOv2 encoder \citep{zhou2025dinowm}, PLDM trains end-to-end with
a VICReg-style objective \citep{sobal2025pldm}, and LeWM combines latent
prediction with Sketched Isotropic Gaussian Regularization (SIGReg) for stable
end-to-end learning \citep{balestriero2025lejepa,maes2026lewm}. We retain
LeWM's predictive backbone but ask how an action-conditioned deployment
interface can shape, rather than collapse, that same representation.
Two recent extensions improve the forward-model side without changing this
search interface. C-JEPA uses object-level latent masking and reports efficient
CEM on PushT \citep{nam2026cjepa}; Fast-LeWM predicts all action-prefix horizons
in parallel and reduces the matched TwoRoom CEM solve from 54.4 to 28.3 seconds
while retaining the same candidate budget \citep{gao2026fastlewm}. Both train a
separate world model per reported control domain. INTACT instead asks whether a
single shared encoder can expose a directly deployable action conditional
across the four LeWM domains.
The distinction is important: a forward-only encoder is not meaningless, but
its compression criterion is prediction sufficiency rather than immediate
action sufficiency. INTACT uses labeled transitions to calibrate that
bottleneck without replacing its predictive role.

\paragraph{Inverse dynamics and representation learning.}
Inverse models have long been used to isolate controllable visual structure
\citep{pathak2017curiosity,efroni2022exogenous} and support multitask imitation
\citep{brandfonbrener2023inverse}. An early robotic precursor jointly learned
forward and inverse models from unlabeled poking interaction, using inverse
prediction to shape visual features and forward prediction to regularize their
dynamics \citep{agrawal2016poking}. SMWM shows that one-step inverse supervision
can regularize an end-to-end JEPA toward sensorimotor features
\citep{ivashkov2026smwm}; broader probing work likewise links temporal
pretraining to action recoverability \citep{yeom2026actionrelevant}. These
results concern action-relevant representation, but a local inverse evaluated
only on observed successors does not identify its behavior on a goal-derived
intent. \method makes this support mismatch explicit and keeps the learned
conditional as a deployment interface.

\paragraph{Hindsight goal-conditioned imitation.}
GCSL relabels a trajectory with a future state that it actually reaches and
maximizes the corresponding goal-conditioned action likelihood, iterating
between collection under the current policy and supervised updates
\citep{ghosh2019gcsl}. Complementary formulations cast goal reaching as
recursive future-state classification or learn a contrastive representation
whose inner product is a goal-conditioned value function
\citep{eysenbach2021clearning,eysenbach2022contrastive}. INTACT's deployment
branch has this GCSL-like statistical
form, and goal-intent-only is the corresponding hindsight-imitation ablation in
our factorial. It is not the original GCSL algorithm: it retains the JEPA
forward objective and SIGReg and uses fixed offline expert trajectories rather
than iteratively collecting its own. INTACT is not goal-only GCSL. Its
deployment branch is a GCSL-like hindsight action likelihood inside a jointly
trained JEPA, while the complete method additionally uses the same conditional
operator on attached one-chunk physical successors to anchor local
reachability.
This distinction is structural rather than terminological. Goal-intent-only
captures one branch call and is therefore the GCSL-like baseline reported in
our tables; physical-inverse-only captures the other branch call. Only Full
INTACT evaluates both on each transition through one weight-shared,
graph-isomorphic predictor while routing endpoint gradients asymmetrically.
Thus the shared operator is not cosmetic parameter tying: it is the mechanism
that gives physically grounded and deployable condition families a common
action-law semantics without matching their latent coordinates.

\paragraph{Options, empowerment, and unsupervised skill discovery.}
The options framework treats closed-loop policies as temporally extended action
units \citep{sutton1999options}, while empowerment measures the information
capacity from an agent's action channel to reachable outcomes
\citep{klyubin2005empowerment}. Modern unsupervised RL often combines these
ideas with a latent skill variable: DIAYN makes skills distinguishable from
visited states \citep{eysenbach2019diayn}; DADS favors skills with predictable
dynamics \citep{sharma2020dads}; CIC contrastively aligns skills with state
transitions \citep{laskin2022cic}; and METRA learns a temporally grounded metric
space in which a policy moves along latent directions \citep{park2024metra}.
These methods share the useful interface $\pi(a\mid s,\xi)$, but solve a
different statistical problem. Their code $\xi$ is generally sampled or
discovered through interaction and optimized by intrinsic reward, mutual
information, coverage, or temporal distance. Our $m$ is not a free behavior
index: it is a state-conditional motion intent grounded in an offline
demonstration, and the recorded action is its supervised target. Rather than
discovering behaviors that make latent codes identifiable, we identify which
latent requests are equivalent under the demonstrated action law.

\paragraph{Metric learning and action-induced equivalence.}
Classical metric learning uses pair or triplet labels to make Euclidean
embeddings invariant within a semantic class and separated across classes
\citep{hadsell2006drlim,schroff2015facenet}. In control, bisimulation metrics
learn task-relevant invariances without pixel reconstruction, while
goal-conditioned bisimulation extends this idea to functional equivariance
between analogous goals
\citep{zhang2021invariant,hansenestruch2022bisimulation}. Action labels play an analogous
organizational role here: across otherwise unordered trajectories, they reveal
families of motion intents requiring similar control. The distinction is that
\method does not impose a pairwise distance or collapse complete JEPA latents
with the same action. Equivalence is defined at fixed state by equality of the
conditional action law and learned through a proper action likelihood. Forward
prediction and SIGReg preserve variations within an action fiber that remain
important for future dynamics. In this sense our quotient is an
output-distribution geometry, not a metric-learning loss on observations.

\paragraph{Amortized and prior-guided world-model control.}
GLAMOR learns goal-conditioned inverse and prior models to guide random
shooting \citep{paster2020glamor}. Biased-MPPI provides a general mechanism for
fusing ancillary sampling distributions into MPPI
\citep{trevisan2024biasedmppi}. More recently, PRISM predicts a
current/goal-conditioned Gaussian trajectory prior from a frozen JEPA encoder
and fuses it with MPPI by precision-weighted Product-of-Gaussians updates
\citep{wang2026prism}. It addresses a train--deployment goal shift through the
planner's deployment-time cost, but still evaluates sampled trajectories; its
reported limitations include task-specific near-expert demonstrations and a
unimodal local prior for potentially multimodal long-horizon behavior.

GC-IDM first studies a pairwise inverse over consecutive frozen LeWM latents
with plans produced by latent linear interpolation
\citep{nguyen2026gcidm}. Its oracle action $R^2=0.993$ does not rescue invalid
straight latent paths; on TwoRoom, its appendix reports 34\% for interpolation,
48\% after refinement and candidate selection, and 87\% for CEM. Conversely,
its final goal-conditioned controller reaches 100\% TwoRoom SR with training
$R^2$ around 0.20 by predicting useful directions and replanning. GC-IDM thus
already shows that scalar coordinate recovery is not a sufficient theory of
control. It avoids imagined latent paths in its final controller and conditions
on encoded current/goal observations plus remaining horizon. INTACT takes
another route: the waypoint intent is a request coordinate rather than a physical
successor, and is placed on training support by pairing it with the demonstrated
current action. INTACT also updates the JEPA encoder end to end and evaluates
both search-free execution and the actor-disabled world model. Our claim is not
the first inverse controller or learned sampling prior; it is that
deployment-supported intent supervision can organize the representation and
directly amortize action search.

Statistically, the frozen-encoder stage used by GC-IDM and PRISM is closer to
behavior cloning or prior fitting on top of a fixed visual representation: it
can exploit information that the encoder already exposes, but its action loss
cannot decide which distinctions that encoder should preserve. INTACT is
unified in a stricter sense. The visual encoder, forward dynamics, and
deployment-supported action conditional are optimized together, and the same
encoder is shared across all four domains. Among LeWM and the LeWM-derived
systems compared here, we are not aware of a prior report that jointly trains
one four-domain encoder while allowing the deployed action conditional to shape
that encoder. This is a scoped claim about the published interfaces in
\cref{tab:positioning}, not a claim that frozen-feature controllers are
ineffective.

\paragraph{Action quotients and visual fibers.}
QuoVLA formalizes a quotient of VLM prompt representations induced by the
Bayes-optimal action-trajectory law and approximates it with quantization and a
dual-branch VLA design \citep{wang2026quovla}. We share the general principle
that representations distinguished by perception may be equivalent for
action, and do not claim to originate action quotients. Our quotient acts on
local/goal motion \emph{intents at fixed JEPA state}, not on the complete
VLM prompt representation; it predicts a transition-aligned action block and
coexists with an explicit forward world model. FiberTune identifies the
complementary risk that action supervision leaves visually meaningful
directions within an action fiber unconstrained and therefore vulnerable to
collapse \citep{lin2026fibertune}. This supports our architectural separation:
the actor may quotient action-equivalent conditions, while forward prediction
and SIGReg preserve the richer world latent.

\paragraph{Unified video--action models.}
UWM couples video and action diffusion to expose forward, inverse, policy, and
generation queries \citep{zhu2025uwm}; DreamGen converts imagined futures to
actions through a separate inverse or latent-action model
\citep{jang2025dreamgen}. Qantara trains a JEPA for several dispatch-dependent
control paradigms using bridge flow \citep{rakhimov2026qantara}. These systems
motivate a unified conditional view. Our scope is deliberately smaller and
mechanistic: a compact reward-free JEPA, a proper endpoint action likelihood,
and controlled evidence linking the statistical unit of supervision to latent
geometry and direct control.

\begin{table*}[t]
\centering
\scriptsize
\caption{Positioning among closely related methods. ``Action$\rightarrow E$''
means that action supervision updates the visual/world encoder, and
``Shared-4T $E$'' means one encoder is jointly trained over all four official
LeWM domains.  The final column states each method's actual training-to-control
path; a cross denotes an unreported interface, not an impossible extension.}
\label{tab:positioning}
\setlength{\tabcolsep}{2.4pt}
\renewcommand{\arraystretch}{1.08}
\begin{tabularx}{\textwidth}{@{}lcc
>{\raggedright\arraybackslash\hspace{0pt}}p{1.48cm}
>{\raggedright\arraybackslash\hspace{0pt}}p{1.62cm}
cc>{\raggedright\arraybackslash\hspace{0pt}}X@{}}
\toprule
Method & Action$\to E$ & Shared-4T $E$ & Condition & Action obj. & Direct & Search & Training-to-control path \\
\midrule
DINO-WM \citep{zhou2025dinowm}
& \xmark & \xmark & action & none & \xmark & \cmark & Frozen image encoder; CEM ranks latent rollouts \\
C-JEPA \citep{nam2026cjepa}
& \xmark & \xmark & action/proprio. & none & \xmark & \cmark & Frozen object features; CEM searches masked rollouts \\
LeWM \citep{maes2026lewm}
& \xmark & \xmark & action & none & \xmark & \cmark & Joint JEPA prediction; CEM inverts the forward model \\
Fast-LeWM \citep{gao2026fastlewm}
& \xmark & \xmark & action prefix & none & \xmark & \cmark & LeWM training; parallel action prefixes are searched \\
SMWM \citep{ivashkov2026smwm}
& \cmark & \xmark & successor & local action & \xmark & \cmark & Local inverse shapes the encoder; deployment still uses CEM \\
GC-IDM \citep{nguyen2026gcidm}
& \xmark & \xmark & goal + horizon & next action & \cmark & \xmark & Frozen LeWM; a separate goal/horizon actor acts directly \\
PRISM \citep{wang2026prism}
& \xmark & \xmark & state/goal & Gaussian prior & \xmark & \cmark & Frozen LeWM; a separately trained prior initializes MPPI \\
Qantara \citep{rakhimov2026qantara}
& \cmark & \xmark & dispatch-dependent & action flow & \cmark & \cmark & Joint JEPA; bridge-flow heads serve dispatch modes \\
QuoVLA \citep{wang2026quovla}
& \cmark & \xmark & visual/language & action chunk & \cmark & \xmark & VLM action quotient; quantized intent drives a VLA actor \\
\method{}
& \cmark & \cmark & local/goal intent & action block & \cmark & opt. & One shared JEPA/actor law; Direct by default, search verifies \\
\bottomrule
\end{tabularx}
\end{table*}

\section{INTACT: Theory and Method}
\label{sec:method}

INTACT augments a forward JEPA with one conditional action operator used twice:
once on a realized physical transition and once on a condition available before
acting.  The two calls share parameters and feature grammar but deliberately
have different endpoint gradients.  We first give the single-task construction
and then extend it to a shared multi-task encoder.

\begin{figure*}[t]
\centering
\includegraphics[width=0.98\textwidth]{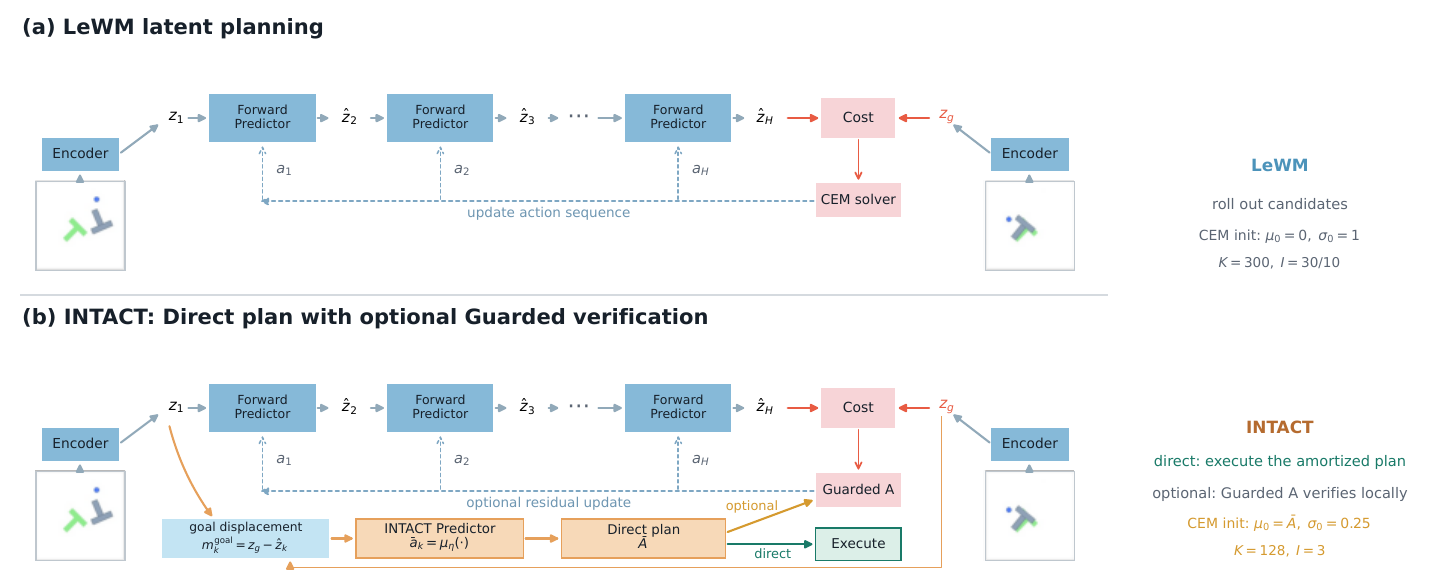}
\caption{\textbf{INTACT converts mandatory search into optional verification.}
LeWM samples raw actions from an uninformed Gaussian and uses forward rollouts
to become goal-directed.  INTACT alternates the raw goal displacement, the
shared conditional action mean, and the unchanged Forward Predictor to produce
a Direct plan.  It can execute immediately or locally verify that plan with
$K=128$, $I=3$, and $\sigma_0=0.25$.}
\label{fig:planning_pipeline}
\end{figure*}

\subsection{Problem Setting and the Forward-Action Gap}
\Cref{fig:overview,fig:planning_pipeline} show the same system from its training
and deployment views.  In \cref{fig:overview}, superscript $k$ indexes a visual
domain and its task-specific heads; below we rename that index $\tau$ and
reserve $k$ for temporal position inside a window.  For a single task we omit
$\tau$.  This removes a notational collision between the four colored task
routes in \cref{fig:overview} and the recurrent planning steps in
\cref{fig:planning_pipeline}.

For domain $\tau$, let the offline data be pixel--action trajectories
$\mathcal D^\tau=\{(\vect o_{i,s}^\tau,\vect u_{i,s}^\tau)\}$, where $i$ indexes
trajectories and $s$ indexes environment steps.  We index model-chunk
boundaries by $t$ and form the fixed action block
$\vect a_{i,t}^\tau=[\vect u_{i,Bt}^\tau;\ldots;
\vect u_{i,Bt+B-1}^\tau]\in\R^{Bd_a}$ with $B=5$.  A training window spans
$H=5$ such chunks from $o_t$ to $o_{t+H}$, and the shared encoder-projector
gives $z_j=E_\theta(o_j)\in\R^d$ and $z_g=z_{t+H}$.  At teacher step
$k\in\{0,\ldots,H-1\}$, the orange local call in \cref{fig:overview} uses
$z_{t+k+1}-z_{t+k}$, while the green goal call uses
$\operatorname{sg}(z_g)-z_{t+k}$.  They enter the same four typed input slots
of $G_\eta$ and both predict the fixed one-chunk target $a_{t+k}$; the
$\widehat A_t^\tau$ box in the figure denotes this conditional action output.
The goal endpoint stays fixed while its remaining offset is
$r=H-k\in\{5,4,3,2,1\}$ chunks.  Thus ``multi-horizon'' means multiple
remaining goal offsets, not multiple output chunk lengths, and the final raw
goal policy does not receive $r$ explicitly.

At deployment, the lower route of \cref{fig:planning_pipeline} reuses exactly
these symbols: encode the current and goal images as $z_t$ and $z_g$, set
$\widetilde z_0=z_t$, then recurrently compute
$m_k=z_g-\widetilde z_k$, $\bar a_k=\mu_\eta(\widetilde z_k,m_k,\bar a_{k-1})$,
and $\widetilde z_{k+1}=F_\phi(\widetilde z_k,\bar a_k)$.  The one-chunk means
form the plan $\bar A=(\bar a_0,\ldots,\bar a_{H-1})$ corresponding to the
training-time action output in \cref{fig:overview}.  Direct executes its first
block and replans from the next observation; Guarded A only verifies this plan
locally.  The upper route in \cref{fig:planning_pipeline} instead initializes
raw actions independently of these learned intent coordinates.

Across both routes, the unchanged LeWM Forward Predictor consumes the last
$N=3$ latents and action blocks,
\begin{equation}
\widehat z_{t+1}=F_\phi(z_{t-N+1:t},A_\omega(a_{t-N+1:t})),
\label{eq:forward}
\end{equation}
and is trained with prediction plus SIGReg,
\begin{equation}
\mathcal L_{\rm world}
=\mathbb E[\|\widehat z_{t+1}-z_{t+1}\|_2^2/d]
+\lambda_{\rm sig}\mathcal R_{\rm SIG}.
\label{eq:worldloss}
\end{equation}
This learns how a supplied action changes the latent, but not how to obtain an
action from a deployable latent request.  LeWM consequently samples actions
and uses the forward model only to rank them.

\subsection{Action-Aligned Intent Coordinates}
\label{sec:single_theory}
The center of \cref{fig:overview} gives the graphical counterpart of this
section: its attached local path carries a realized one-chunk successor, while
its stop-gradient goal path carries a condition available before acting.  In
\cref{fig:planning_pipeline}, only the latter can enter the lower deployment
route before the next action is chosen.  The endpoint $y$, displacement
$m_z(y)$, and conditional law below formalize those two pictured paths and
explain why neither one alone closes the forward--action gap.

For current latent $z$ and endpoint condition $y$, let
$p_E^\star(a\mid z,y)$ be the expert action law.
\begin{definition}[Conditional action equivalence]
\label{def:quotient}
\begin{equation}
y\sim_z y'\quad\Longleftrightarrow\quad
p_E^\star(a\mid z,y)=p_E^\star(a\mid z,y').
\label{eq:quotient}
\end{equation}
\end{definition}
The quotient $\mathcal Y/{\sim_z}$ is in one-to-one correspondence with the
realizable action-law image at $z$.
\begin{proposition}[Quotient-to-law identification]
\label{prop:quotient_image}
$\Phi_z([y]_z)=p_E^\star(\cdot\mid z,y)$ is well defined and bijective from
$\mathcal Y/{\sim_z}$ onto
$\{p_E^\star(\cdot\mid z,y):y\in\mathcal Y\}$.
\end{proposition}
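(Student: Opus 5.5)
This is the standard first isomorphism theorem for sets, applied to the map $\psi_z:\mathcal Y\to\mathcal P$, $\psi_z(y)=p_E^\star(\cdot\mid z,y)$. Here $\mathcal P$ is any space of action laws containing the expert conditionals. By Definition~\ref{def:quotient}, $\sim_z$ is exactly the kernel relation of $\psi_z$: $y\sim_z y'$ iff $\psi_z(y)=\psi_z(y')$. The plan is therefore to check three things in order. First, $\sim_z$ is an equivalence relation. Second, $\Phi_z$ is well defined on classes. Third, $\Phi_z$ is injective and surjective onto the image $\psi_z(\mathcal Y)$.

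\emph{Equivalence relation.} Reflexivity, symmetry and transitivity of $\sim_z$ follow immediately from the same properties of equality in $\mathcal P$. Hence the quotient $\mathcal Y/{\sim_z}$ and the classes $[y]_z$ are meaningful.

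\emph{Well-definedness and injectivity.} These are the two directions of the single biconditional in \eqref{eq:quotient}. If $[y]_z=[y']_z$, then $y\sim_z y'$. The forward direction of \eqref{eq:quotient} then gives $\Phi_z([y]_z)=p_E^\star(\cdot\mid z,y)=p_E^\star(\cdot\mid z,y')=\Phi_z([y']_z)$, so the value does not depend on the representative. Conversely, if $\Phi_z([y]_z)=\Phi_z([y']_z)$, the reverse direction of \eqref{eq:quotient} gives $y\sim_z y'$, hence $[y]_z=[y']_z$.

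\emph{Surjectivity.} Any element of the target set has the form $p_E^\star(\cdot\mid z,y)$ for some $y\in\mathcal Y$, and this equals $\Phi_z([y]_z)$. Equivalently, $\psi_z=\Phi_z\circ\pi_z$ with $\pi_z$ the canonical projection, which gives the claimed factorization.

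The only genuine obstacle is the meaning of ``equality of laws'' in \eqref{eq:quotient}. If $p_E^\star(\cdot\mid z,y)$ is only a regular conditional distribution, defined up to null sets in $(z,y)$, then $\psi_z$ is not a genuine function. In that case I would fix a version of the conditional law and interpret equality as equality of probability measures on the action space; densities agreeing almost everywhere in $a$ count as the same measure. With this convention, $\psi_z$ is a bona fide map into the set of probability measures, and the argument above goes through verbatim. I would note explicitly that the bijection is relative to the chosen version and holds for each fixed $z$.
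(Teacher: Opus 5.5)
Your proposal is correct and takes the same approach as the paper. The paper gives no separate written proof: it treats the bijection as immediate because \cref{def:quotient} makes $\sim_z$ exactly the kernel relation of $y\mapsto p_E^\star(\cdot\mid z,y)$, so well-definedness and injectivity are the two directions of \eqref{eq:quotient} and surjectivity holds by definition of the image. Your extra caveat about fixing a version of the regular conditional law is a sensible measure-theoretic point that the paper leaves implicit.
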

The displacement
\begin{equation}
m_z(y)=y-z
\label{eq:motion_intent}
\end{equation}
is a convenient motion-intent coordinate, not a claim that latent dynamics are
globally linear or that equivalent endpoints must coincide.

A physical inverse likelihood,
\begin{equation}
\mathcal L_{z^+}=\mathbb E[-\log p_\eta(a_t\mid z_t,z_{t+1},a_{t-1})],
\label{eq:inverse_nll}
\end{equation}
asks the encoder to preserve action-recoverable transitions, but it is circular
at deployment because $z_{t+1}$ is observed only after acting.  Conversely, a
goal-conditioned likelihood is deployable but, by itself, is GCSL-like
hindsight imitation without an attached one-chunk physical anchor.  This is the
central asymmetry:
\[
\underbrace{z_{t+1}}_{\substack{\text{physical but unavailable}\\
\text{before acting}}}
\qquad\text{versus}\qquad
\underbrace{z_g}_{\substack{\text{available but not a}\\
\text{one-step successor}}}.
\]

INTACT uses one nonlinear conditional operator for both.  This is the single
INTACT Predictor drawn twice in \cref{fig:overview}; its goal invocation is the
action-generating operator on the lower route of \cref{fig:planning_pipeline}:
\begin{equation}
G_\eta:(z,m,a_{t-1})\longmapsto p_\eta(a\mid z,m,a_{t-1}),
\label{eq:homologous_predictors}
\end{equation}
while retaining the separate Forward Predictor $F_\phi:(z,a)\mapsto z^+$.
The final goal-displacement route constructs
\begin{equation}
\boxed{m_t^{\rm local}=z_{t+1}-z_t,\qquad
m_t^{\rm goal}=\operatorname{sg}(z_g)-z_t.}
\label{eq:local_goal_conditions}
\end{equation}
The local endpoint remains attached; stop-gradient applies only to the future
goal occurrence in the deployment loss.  The same $z_g$ is therefore plastic
when it appears as a real successor and fixed when it serves as an intent
anchor.  Current-state latents, $G_\eta$, and the independent forward/physical
branches continue to receive gradients.

For route $\chi$, the physical and deployable condition families are
\begin{align}
\mathcal C_{\rm phy}^{\chi}&=\{(z_t,c_t^{\rm phy,\chi},a_{t-1})\},\\
\mathcal C_{\rm dep}^{\chi}&=\{(z_t,c_t^{\rm dep,\chi},a_{t-1})\},
\label{eq:condition_families}
\end{align}
and one shared action operator serves their union,
\begin{equation}
G_\eta:\mathcal C_{\rm phy}^{\chi}\cup\mathcal C_{\rm dep}^{\chi}
\rightarrow\mathcal P(\mathcal A).
\label{eq:shared_action_operator}
\end{equation}
Its paired proper likelihood is
\begin{align}
\mathcal L_{\rm I2A}^{\chi}
={}&\lambda_{\rm inv}\mathbb E[-\log p_\eta(a_t\mid z_t,c_t^{\rm phy,\chi},a_{t-1})]\notag\\
&+\lambda_{\rm goal}\mathbb E[-\log p_\eta(a_t\mid z_t,c_t^{\rm dep,\chi},a_{t-1})].
\label{eq:family_action_alignment}
\end{align}
The first NLL is the attached physical call in \cref{fig:overview}, and the
second is its stop-gradient deployment-goal call.  Thus
\cref{eq:condition_families,eq:shared_action_operator,eq:family_action_alignment}
are the set, operator, and loss views of the same two-arrow construction in
the figure.
No term minimizes $\|m_t^{\rm goal}-m_t^{\rm local}\|^2$.  The two calls may
have different input scales and output Gaussians.  Here \textbf{isomorphic}
again has two levels: their backbone-input graphs are \textbf{isomorphic} by
typed slots and shared parameters, while their supported motion-intent
families are \textbf{isomorphic} through the conditional action-law relation
represented by $G_\eta$.  Neither level asserts numerical latent equality.

\begin{proposition}[Off-support non-identifiability]
\label{prop:nonidentifiability}
If inverse risk is evaluated only on realized successors, any two actors that
agree on that support have identical risk even when they disagree on a
deployable goal condition outside it.
\end{proposition}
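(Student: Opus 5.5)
The plan is to formalize the risk functional and show that it depends on the actor only through its restriction to the physical support. Define the inverse risk as
\[
R_{\rm phy}(p)=\mathbb E_{(z_t,z_{t+1},a_{t-1},a_t)\sim\mathcal D}\bigl[-\log p(a_t\mid z_t,m^{\rm local}_t,a_{t-1})\bigr],
\]
an integral with respect to the data distribution of $(z_t,m^{\rm local}_t,a_{t-1})$ and of $a_t$ given that condition. Let $S\subseteq\mathcal C_{\rm phy}$ be the support of the condition marginal; this is the set of realized successor conditions in \cref{eq:condition_families}. First I would rewrite $R_{\rm phy}(p)$ as an iterated integral. The outer integral runs over conditions $c\in S$, and the inner integral is the cross-entropy $\int -\log p(a\mid c)\,dp_E^\star(a\mid c)$. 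The integrand at $c$ therefore involves only the conditional law $p(\cdot\mid c)$.

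Second, take two actors $p_1,p_2$ with $p_1(\cdot\mid c)=p_2(\cdot\mid c)$ for every $c\in S$. Agreement on a measure-zero subset would also suffice, but I would state it for all of $S$ to stay pointwise. Their integrands coincide everywhere on $S$, so $R_{\rm phy}(p_1)=R_{\rm phy}(p_2)$. This holds whether the common value is finite or $+\infty$, so the argument needs no further assumption there.

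Third, I would show that the agreement hypothesis still allows disagreement at a deployable condition. Take a goal condition $c^\star=(z_t,\operatorname{sg}(z_g)-z_t,a_{t-1})\in\mathcal C_{\rm dep}$ with $c^\star\notin S$. Set $p_2(\cdot\mid c)=p_1(\cdot\mid c)$ for $c\neq c^\star$, and let $p_2(\cdot\mid c^\star)$ be any law different from $p_1(\cdot\mid c^\star)$. This yields a valid conditional kernel that agrees with $p_1$ on the support but disagrees at the goal condition. If measurability matters, I would modify $p_1$ on a small measurable neighbourhood of $c^\star$ that is disjoint from $S$ instead of at a single point.

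The main obstacle is justifying that a deployable goal intent can lie outside $S$. A goal displacement over $r\geq 2$ chunks has no reason to equal any realized one-chunk displacement $z_{t+1}-z_t$ from the same state. I would argue this generically, for example when the realized successor set at fixed $z_t$ is lower-dimensional or bounded away from typical multi-chunk displacements. I would also note that if the actor class is a continuous network, the freedom to modify the actor off-support is restricted. In that case I would invoke the flexibility of the class, or simply state the proposition for the class of all conditional kernels, as the statement intends. With that hypothesis fixed, the proof is immediate and motivates the goal term in \cref{eq:family_action_alignment}.
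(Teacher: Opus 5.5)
Your proposal is correct and is essentially the paper's argument: the successor-only risk integrates $-\log p(a\mid c)$ only over conditions $c$ in the support $S$, so two actors that agree on $S$ have the same integrand and hence the same risk, and any change at an off-support goal condition leaves the integral untouched. One aside is stated backwards: what suffices is agreement outside a null subset of $S$, not agreement ``on a measure-zero subset''; your explicit construction of a disagreeing pair goes beyond the paper, which simply takes the off-support deployable condition as given, as the conditional statement allows.
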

\begin{proposition}[Supported quotient recovery]
\label{prop:proper}
For a well-specified conditional actor, population proper-NLL minimization
recovers $p_E^\star(\cdot\mid z,y)$ on every supported condition.  Hence
action-equivalent representatives receive the same predicted action law.
\end{proposition}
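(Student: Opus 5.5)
The argument rests on the Gibbs/strict-propriety decomposition of expected negative log-likelihood, combined with well-specification. The conditional NLL risk splits into a sum over conditions weighted by the condition marginal. The minimizer is therefore pinned down separately on each condition carrying positive mass. Afterwards I pass from ``supported condition'' to ``action-equivalent representatives'' through Definition~\ref{def:quotient} and Proposition~\ref{prop:quotient_image}.

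\emph{Step 1 (decomposition).} Let $\rho$ be the population distribution over conditions $c=(z,y)$; the previous action block $a_{t-1}$ is absorbed into $z$ or $y$. Let the expert law be $p_E^\star(a\mid z,y)$. For any conditional actor $p_\eta$,
\begin{equation*}
\mathcal R(\eta)=\mathbb E_{c\sim\rho}\bigl[H(p_E^\star(\cdot\mid c))+\mathrm{KL}\bigl(p_E^\star(\cdot\mid c)\,\|\,p_\eta(\cdot\mid c)\bigr)\bigr].
\end{equation*}
Because the first term does not depend on $\eta$, minimizing $\mathcal R$ is the same as minimizing the $\rho$-averaged KL. I will assume finite entropy so that subtracting it is legitimate. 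For the paired loss in \cref{eq:family_action_alignment}, the same decomposition holds with $\rho$ replaced by the mixture $\lambda_{\rm inv}\rho_{\rm phy}+\lambda_{\rm goal}\rho_{\rm dep}$. This is a sum of nonnegative terms.

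\emph{Step 2 (identification on support).} Well-specification means some $\eta^\star$ satisfies $p_{\eta^\star}=p_E^\star$ $\rho$-almost surely. That actor attains the lower bound of zero excess KL. Any population minimizer $\hat\eta$ must then have $\mathrm{KL}=0$ for $\rho$-a.e.\ condition, so $p_{\hat\eta}(\cdot\mid c)=p_E^\star(\cdot\mid c)$ almost everywhere on the support. This is exactly where Proposition~\ref{prop:nonidentifiability} limits the claim: nothing is asserted off support.

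\emph{The main obstacle} is upgrading ``$\rho$-a.e.''\ to ``every supported condition''. With continuous conditions, a measure-zero exceptional set is otherwise unconstrained. I would first interpret ``supported'' as ``in the support of $\rho$''. I would then add the mild regularity that the model class and $p_E^\star$ are continuous in $c$ (for example, in total variation). The a.e.\ equality is closed under limits, so it extends to the closed support, because every point there is a limit of points of positive density. If that is too strong, I would instead state the result for atoms or for $\rho$-a.e.\ conditions, and note that this suffices for the expectations used in training.

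\emph{Step 3 (quotient consequence).} Take supported $y\sim_z y'$. By Definition~\ref{def:quotient}, $p_E^\star(\cdot\mid z,y)=p_E^\star(\cdot\mid z,y')$. Step 2 then gives $p_{\hat\eta}(\cdot\mid z,y)=p_E^\star(\cdot\mid z,y)=p_E^\star(\cdot\mid z,y')=p_{\hat\eta}(\cdot\mid z,y')$. Equivalently, $p_{\hat\eta}(\cdot\mid z,\cdot)$ factors through the map $\Phi_z$ of Proposition~\ref{prop:quotient_image} on the supported classes. This also applies when one representative comes from $\mathcal C_{\rm phy}$ and the other from $\mathcal C_{\rm dep}$, since the shared operator $G_\eta$ is evaluated on their union.
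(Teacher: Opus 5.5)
Your proposal is correct and is essentially the paper's proof: you decompose the log loss condition-wise into the expert entropy plus $D_{\mathrm{KL}}(p_E^\star\|p_\eta)$, use nonnegativity of the KL with equality only at the expert law on supported conditions, and then derive the same one-line quotient consequence via Definition~\ref{def:quotient}. The only differences are bookkeeping: the paper reads ``supported'' as endpoint pairs of positive probability (your atom fallback), which avoids the a.e.-to-everywhere upgrade, and it leaves implicit what your Step~2 makes explicit, namely that well-specification is what lets one shared $\eta$ attain zero KL on all supported conditions at once.
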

Proofs, the horizon-normalized waypoint derivation, the Bernoulli endpoint
variant, and the direct-linearity negative control are collected in
\cref{app:proofs,app:method_derivations}.  The waypoint request is an ablation;
the headline model uses the raw displacement in
\cref{eq:local_goal_conditions} because it preserves direction and remaining
distance without imposing equal progress.

The single-task objective is
\begin{equation}
\boxed{\mathcal L_{\rm ST}=\mathcal L_{\rm world}
+\lambda_{\rm inv}\mathcal L_{z^+}
+\lambda_{\rm goal}\mathcal L_{\rm goal}.}
\label{eq:objective}
\end{equation}
This end-to-end coupling makes action-law aliasing visible to the encoder: a
frozen head cannot recover a controllable distinction already collapsed by
$E_\theta$.

\begin{algorithm}[t]
\caption{INTACT training}
\label{alg:train}
\begin{algorithmic}[1]
\Require window $(o_{t:t+H},a_{t-1:t+H-1})$; shared $E_\theta,F_\phi,G_\eta$
\State $z_{t:t+H}\gets E_\theta(o_{t:t+H})$;
$z_g^{\rm anchor}\gets\operatorname{sg}(z_{t+H})$
\For{$k=0,\ldots,H-1$}
  \State $j\gets t+k$;
  $(m_j^{\rm local},m_j^{\rm goal})\gets
  (z_{j+1}-z_j,z_g^{\rm anchor}-z_j)$
  \State accumulate physical and goal NLLs through the same $G_\eta$
\EndFor
\State update $(\theta,\phi,\eta)$ with
$\mathcal L_{\rm world}+\lambda_{\rm inv}\mathcal L_{z^+}
+\lambda_{\rm goal}\mathcal L_{\rm goal}$
\end{algorithmic}
\end{algorithm}

\subsection{INTACT Predictor and Control}
\Cref{fig:overview} depicts the identical input wiring that makes the two
INTACT Predictor invocations structurally isomorphic; the four incoming
quantities are written explicitly as
\begin{equation}
h_t^\Delta(m_t)=[z_t;m_t;z_t\odot m_t;A_\omega(a_{t-1})].
\label{eq:actor_features}
\end{equation}
The product is a cheap state--intent interaction:
\begin{equation}
w^\top(z_t\odot m_t)=z_t^\top\operatorname{diag}(w)m_t,
\label{eq:product_interaction}
\end{equation}
allowing the same displacement to be interpreted differently across contact,
pose, and obstacle states.  A three-layer MLP outputs a diagonal Gaussian with
log standard deviation clipped to $[-5,2]$.  Detailed endpoint grammar and the
matched A--G product ablation are in \cref{app:design_selection}.

At deployment, the lower INTACT route in \cref{fig:planning_pipeline} is the
recurrent composition of \cref{eq:actor_features} with the unchanged Forward
Predictor: the Gaussian mean generates an $H$-block plan and
re-encodes after executing the next real action block.  Direct uses no sampled
candidate and no terminal cost.  Optional Guarded A centers a small
$128\times3$ raw-action search at this plan, retains the Direct reference and
the globally best candidate, and uses $\sigma_0=0.25$.  Pure CEM disables the
INTACT Predictor and recovers the upper search-first route in the figure; it is
retained only as a compatibility and actor-disabled control.
\Cref{fig:planning_pipeline,alg:control} therefore give the visual and
algorithmic views of the same Direct controller and optional local
verification.  The complete verifier matrix appears in
\cref{app:planner_matrix}.

\begin{algorithm}[t]
\caption{INTACT Direct control with optional local verification}
\label{alg:control}
\begin{algorithmic}[1]
\Require latent history, goal, and inference mode $v$
\State $\widetilde z_0\gets z_t$
\For{$k=0,\ldots,H-1$}
  \State $m_k\gets z_g-\widetilde z_k$;
  $\bar a_k\gets\mu_\eta(\widetilde z_k,m_k,\bar a_{k-1})$
  \State $\widetilde z_{k+1}\gets F_\phi(\widetilde z_k,\bar a_k)$
\EndFor
\If{$v=\mathrm{Direct}$} \State \Return $\bar A$ with zero candidates and cost calls \EndIf
\If{$v=\mathrm{one\mbox{-}shot}$}
  \State score $\bar A$ and clipped local perturbations; return the observed minimum
\Else
  \State initialize $(\mu,A_{\rm best})\gets(\bar A,\bar A)$ and $\sigma\gets0.25$
  \For{$i=1,\ldots,3$}
    \State score 128 samples plus $\bar A,A_{\rm best}$; refit on 16 elites
  \EndFor
  \State re-score $\mu$; return the best observed reference or candidate
\EndIf
\end{algorithmic}
\end{algorithm}

\subsection{Multi-Task Extension}
\label{sec:multitask_theory}
For heterogeneous tasks $\tau\in\{1,\ldots,T\}$, INTACT shares the encoder and
projector but keeps task-specific Forward, action-embedding, and INTACT heads.
The known domain index selects a head; no reward or additional semantic label
is used.  The objective is
\begin{equation}
\mathcal L_{\rm MT}=\sum_{\tau=1}^{T}\rho_\tau
\left[\mathcal L_{\rm world}^{\tau}
+\lambda_{\rm inv}\mathcal L_{z^+}^{\tau}
+\lambda_{\rm goal}\mathcal L_{\rm goal}^{\tau}\right].
\label{eq:multitask_objective}
\end{equation}
Every task therefore shapes the same visual coordinates without forcing
heterogeneous actions into one padded space.  The controlled factorial compares
forward-only LeWM, physical inverse-only, intent-only, and Full INTACT under
matched deterministic Math-SDPA.  We test both waypoint and raw-goal intent
coordinates.  Direct SR measures the deployed conditional; actor-disabled CEM
tests the encoder--Forward stack under a fixed optimizer; frozen probes and
predicted--expert CKA/kNN diagnose representation and family conversion.  None
of these diagnostics replaces closed-loop SR.

\section{Experimental Design}
\label{sec:experiments}

\paragraph{Tasks and evaluation.}
We use the four official LeWM domains: PushT, OGBench Cube, DMC Reacher, and
TwoRoom \citep{maes2026lewm,park2025ogbench}.  Models receive RGB observations
and five-step action blocks; simulator state is used only by probes and success
predicates.  Headline SR follows the official evaluator for comparability.
Unless stated otherwise, every checkpoint is evaluated with seeds
$\{0,1,42\}$ and 100 episodes per seed.  We separately report the public
CLEAR-LeWM v0.5.1 Moderate audit \citep{sun2026clearlewm}, which removes
initially solved pairs and repairs known Reacher and TwoRoom evaluator defects.
Official and CLEAR scores are never pooled.

\subsection{Single-Task Protocol}
\label{sec:single_experiments}
Each task-specific model is trained from scratch on its complete dataset for
one epoch with batch size 256, AdamW learning rate $5\times10^{-4}$, forward
weight 1.0, inverse weight 0.1, goal weight 0.05, and three training seeds.  The
final goal-displacement model uses SIGReg 0.02 and the shared grammar in
\cref{eq:actor_features}.  A matched waypoint chain isolates forward-only
LeWM, physical inverse supervision, deployment-intent supervision, and the
low-SIGReg setting.  Actor sharing is tested against parameter-matched,
double-capacity, two-output, and condition-token alternatives.

Direct asserts zero candidate sequences and zero terminal-cost calls.  Pure
CEM disables the INTACT Predictor and evaluates the encoder--Forward stack
under a fixed raw-action optimizer; it is a functional control, not an
encoder-only statistic.  Guarded A evaluates 128 sequences for three
iterations around the Direct plan.  The complete 576-job planner matrix,
coordinate grammar, SIGReg, straightening, optimizer, and search-radius
selection studies are deferred to
\cref{app:design_selection,app:planner_matrix} because they select a final
configuration rather than establish the principal result.

Frozen probes use episode-disjoint train/test episodes.  We measure physical
state and transition-action $R^2$, effective rank, per-dimension standard
deviation, collapsed-dimension fraction, and mean cosine similarity.  Together
with actor-disabled CEM, these distinguish representation shaping from the
benefit of executing the learned action conditional.

\subsection{Multi-Task Protocol}
\label{sec:multitask_experiments}
One distributed job traverses all four datasets.  The ViT-Tiny/14 encoder,
projector, and 192-dimensional latent are shared; each task retains a small
Forward Predictor, action embedding, and INTACT Predictor.  Training uses batch
size 256, learning rate $5\times10^{-4}$, weight decay $10^{-3}$, SIGReg 0.03,
and five epochs.  Encoder and Forward attention are forced to deterministic
Math-SDPA and every checkpoint stores backend and data fingerprints.

The controlled factorial contains forward-only LeWM, physical inverse-only,
intent-only, and Full INTACT.  Intent-only and Full are each trained with the
waypoint and goal-displacement coordinates, producing six cells.  All cells
use training seeds $\{0,42,3072\}$ and are evaluated at E1--E5.  At each epoch
we run native Direct inference where available, actor-disabled pure CEM
$300\times30$, Guarded A for actor-equipped cells, frozen probes, and latent
diagnostics.  The complete coverage contract and E1/epoch tables are reported
in \cref{app:extended_results}.

For mechanism analysis, predicted--expert linear CKA, local kNN overlap,
action $R^2$, and NLL are computed within each task's native action coordinates
and averaged equally over tasks; heterogeneous action spaces are never
zero-padded into one embedding.  The primary cohort contains both Full INTACT
interfaces, three seeds, and E1--E5.  A separate task-conditioned gauge audit
fits latent coordinate maps on paired, action-free calibration episodes and
tests correct pairing, shuffling, calibration size, cross-seed ceilings, and
reverse actor swaps under CLEAR Moderate.  Full estimator definitions and
controls appear in \cref{app:diagnostics,app:extended_results}.

\section{Results}
\label{sec:results}

\subsection{Single-Task Results}
\label{sec:single_results}

\paragraph{One epoch makes search optional.}
\Cref{tab:published_landscape} places the headline on a resource-aware axis.
Published rows retain their own protocols and are not paired significance
controls.  Every row in this single-task comparison trains a separate visual
model per domain; the shared-encoder study begins in
\cref{sec:multitask_results}.

\begin{table*}[t]
\centering
\scriptsize
\setlength{\tabcolsep}{3.1pt}
\renewcommand{\arraystretch}{0.98}
\caption{\textbf{Published landscape and matched task-specific INTACT inference matrix (official SR, \%).}
Published values are transcribed from the cited papers and are not paired
controls; $\dagger$ marks partial coverage or a distinct reproduction
protocol.  $\ddagger$ marks our matched-E1 reproduction: one task-specific
training seed with mean $\pm$ sample standard deviation over evaluation
manifests $\{0,1,42\}$ (100 episodes each), rather than training-seed
uncertainty.  INTACT values are mean $\pm$ sample standard deviation over three
training seeds, each averaged over three 100-episode evaluation seeds.  The
INTACT block audits all eight exact objective--inference pairings on their
matched checkpoint families.}
\label{tab:published_landscape}
\label{tab:single_direct}
\begin{adjustbox}{max width=\textwidth}
\begin{tabular}{lllcrrrrrr}
\toprule
Method / objective & Training & Deployment & Seq. & PushT & Cube & Reacher & TwoRoom & Macro \\
\midrule
DINO-WM \citep{zhou2025dinowm} & frozen $E$ & CEM (published) & 9k & $74.0\!\pm\!4.5$ & $86.0\!\pm\!4.7$ & $79.0\!\pm\!5.1$ & $100.0\!\pm\!.0$ & 84.75 \\
LeWM \citep{maes2026lewm} & E2E 10 ep & CEM $300{\times}(30/10)$ & 9k/3k & $96.0\!\pm\!4.0$ & $74.0\!\pm\!3.0$ & $86.0\!\pm\!5.0$ & $87.0\!\pm\!2.5$ & 85.75 \\
Fast-LeWM \citep{gao2026fastlewm} & E2E 10 ep & matched CEM & 9k & 96.0 & 80.0 & 88.0 & 98.0 & 90.50 \\
Fast-LeWM + SC \citep{gao2026fastlewm} & E2E 10 ep & CEM + score & 9k & 98.0 & 82.0 & 90.0 & 98.0 & 92.00 \\
Qantara$^\dagger$ \citep{rakhimov2026qantara} & E2E 10 ep & CEM $300{\times}30$, $K=4$ & 9k & $90.1\!\pm\!1.1$ & $93.7\!\pm\!.7$ & $80.9\!\pm\!1.8$ & $100.0\!\pm\!.0$ & 91.18 \\
GC-IDM \citep{nguyen2026gcidm} & LeWM + 50-ep head & Direct & 0 & $84.7\!\pm\!5.0$ & $99.3\!\pm\!1.2$ & $100.0\!\pm\!.0$ & $100.0\!\pm\!.0$ & 96.00 \\
PRISM$^\dagger$ \citep{wang2026prism} & LeWM + 50-ep head & MPPI $128{\times}30$ & 3,840 & $89\!\pm\!4$ & $79\!\pm\!6$ & -- & -- & -- \\
C-JEPA$^\dagger$ \citep{nam2026cjepa} & frozen $E$ + 30 ep & CEM $300{\times}30$ & 9k & 88.67 & -- & -- & -- & -- \\
\midrule
\multicolumn{9}{@{}l}{\emph{Matched Math-SDPA E1 reproduction on a frozen task-specific E1 LeWM encoder}} \\
GC-IDM (matched E1)$^\ddagger$ & frozen E1 $E$ + 1-ep head & Direct & 0 & $9.33\!\pm\!2.31$ & $69.33\!\pm\!7.02$ & $84.33\!\pm\!2.08$ & $99.33\!\pm\!.58$ & 65.58 \\
PRISM (matched E1)$^\ddagger$ & frozen E1 $E$ + 1-ep head & PoG-MPPI $128{\times}30$ & 3,840 & $18.67\!\pm\!5.51$ & $86.67\!\pm\!1.15$ & $66.33\!\pm\!1.15$ & $88.33\!\pm\!1.53$ & 65.00 \\
\midrule
\multicolumn{9}{@{}l}{\emph{Matched task-specific INTACT, E2E one epoch}} \\
\multirow{4}{*}{Waypoint INTACT}
& \multirow{4}{*}{E2E 1 ep} & Direct & 0 & $77.67\!\pm\!.88$ & $99.89\!\pm\!.19$ & $88.11\!\pm\!.69$ & $98.00\!\pm\!.33$ & $90.92\!\pm\!.30$ \\
& & Pure CEM $300{\times}30$ & 9k & $89.67\!\pm\!1.45$ & $67.67\!\pm\!1.33$ & $82.67\!\pm\!1.20$ & $77.33\!\pm\!2.03$ & $79.33\!\pm\!.75$ \\
& & Actor-on CEM $300{\times}30$ & 9k & $92.89\!\pm\!.84$ & $96.56\!\pm\!1.07$ & $80.67\!\pm\!2.08$ & $98.00\!\pm\!.67$ & $92.03\!\pm\!1.06$ \\
& & \best{Guarded A $128{\times}3$} & \best{384} & \best{$89.89\!\pm\!.38$} & \best{$99.78\!\pm\!.19$} & \best{$88.56\!\pm\!1.26$} & \best{$97.89\!\pm\!.19$} & \best{$94.03\!\pm\!.25$} \\
\addlinespace[1pt]
\multirow{4}{*}{\best{INTACT, goal displacement}}
& \multirow{4}{*}{\best{E2E 1 ep}} & \best{Direct} & \best{0} & $85.78\!\pm\!1.54$ & \best{$100.00\!\pm\!.00$} & \best{$97.67\!\pm\!.00$} & $97.89\!\pm\!1.26$ & \best{$95.33\!\pm\!.58$} \\
& & Pure CEM $300{\times}30$ & 9k & $88.44\!\pm\!1.17$ & $68.44\!\pm\!.77$ & $83.67\!\pm\!.67$ & $82.89\!\pm\!.84$ & $80.86\!\pm\!.51$ \\
& & Actor-on CEM $300{\times}30$ & 9k & \best{$93.56\!\pm\!.96$} & $96.89\!\pm\!.19$ & $86.67\!\pm\!.88$ & \best{$98.00\!\pm\!1.15$} & $93.78\!\pm\!.77$ \\
& & \best{Guarded A $128{\times}3$} & \best{384} & \best{$92.22\!\pm\!.69$} & \best{$99.78\!\pm\!.19$} & \best{$97.44\!\pm\!.77$} & \best{$98.00\!\pm\!1.15$} & \best{$96.86\!\pm\!.38$} \\
\bottomrule
\end{tabular}
\end{adjustbox}
\end{table*}

\best{INTACT uses the fewest reported full-data passes among methods with
disclosed epoch schedules: one end-to-end epoch, no frozen-encoder phase, and
no separately trained phase-2 controller.  Direct reaches 95.33 macro SR with
zero candidates and 2.9--5.5\,ms planner-side latency.  Guarded A evaluates
384 rather than 9,000 sequences and reaches 96.86 macro, placing INTACT at the
state-of-the-art level among currently reported JEPA controllers.}

The matched-E1 Math-SDPA rows ask a narrower question than the published headlines:
how much control can a one-epoch head amortize from the same frozen,
task-specific E1 LeWM representation?  GC-IDM uses zero-search Direct control,
whereas PRISM uses PoG-MPPI with $128{\times}30=3{,}840$ candidates.  Their
macros are 65.58 and 65.00, respectively, but the taskwise ordering reverses:
PRISM is higher on PushT/Cube, while GC-IDM is higher on Reacher/TwoRoom.
Because the objectives and inference budgets differ and each row uses one
training seed, this is a controlled E1 stress test rather than a paired
compute or significance ranking.  Descriptively, the end-to-end one-epoch
INTACT Direct row reaches 95.33 macro without candidate search.

The complete matched block in \cref{tab:single_direct} separates training
coordinates from deployment algorithms.  Goal-displacement INTACT reaches
$85.78/100.00/97.67/97.89$\% Direct SR on PushT/Cube/Reacher/TwoRoom, averaged
over three independently trained models per task.  Relative to published LeWM,
it uses one tenth as many full-data passes and removes up to 9,000 candidate
sequences.  Goal displacement outperforms the waypoint coordinate under every
matched interface: by 4.41/1.53/1.75/2.83 macro points for Direct, pure CEM,
actor-on CEM, and Guarded A, respectively.  The actor-disabled macros remain
close (80.86 versus 79.33), showing that broad raw-action search largely
discards the learned intent interface.  More decisively, Goal actor-on CEM
spends 9,000 sequences yet falls 1.55 points below zero-search Direct
(93.78 versus 95.33), whereas Guarded A raises Direct by 1.53 points to 96.86
with only 384 sequences.  The waypoint coordinate shows the same efficiency
ordering: Guarded A reaches 94.03 versus 92.03 for actor-on CEM.  Search is
therefore useful as local verification around the learned plan, while
unconstrained iterative search can recreate a train--deployment gap and waste
the learned conditional.

The separate CLEAR-LeWM Moderate audit preserves the same ordering: Direct and
Guarded A reach $83.33\!\pm\!1.00$ and $84.83\!\pm\!1.32$ macro, versus
$65.47\!\pm\!.77$ for pure CEM.  Reacher drops from 97.67 official to 49.56
Moderate after angle-topology correction, illustrating why CLEAR is reported
separately rather than pooled.  The complete taskwise table and protocol
contract are in \cref{tab:clear_main,app:clear_scope}.

The matched actor-sharing audit is deferred to
\cref{tab:actor_sharing,app:actor_sharing}: separating the two calls loses 5.67
PushT points at matched capacity, and even doubling actor capacity remains 4.00
points below the fully shared conditional.

\begin{figure*}[t]
\centering
\includegraphics[width=0.92\textwidth]{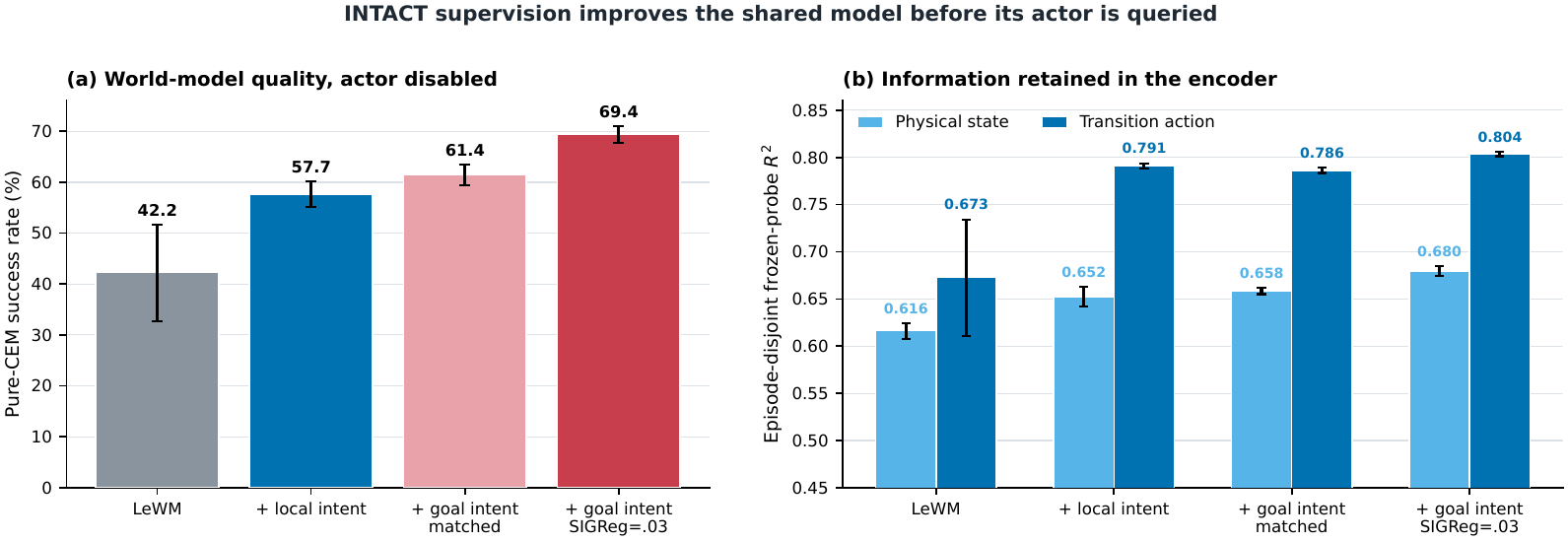}
\caption{\textbf{Single-task action likelihoods shape the representation.}
Actor-disabled CEM evaluates the encoder--Forward stack under a fixed
optimizer; episode-disjoint probes measure readable state and transition
action.  Physical inverse supplies attached successor shaping, whereas the
goal branch updates current-state coordinates and the shared actor with a
stop-gradient goal anchor.  Error bars are sample standard deviations over
three PushT training seeds.}
\label{fig:query_representation}
\end{figure*}

\paragraph{The encoder improves before the actor is used.}
On PushT, adding physical inverse supervision raises actor-disabled CEM
$30\times10$ from $42.22\!\pm\!9.50$ to $57.67\!\pm\!2.52$\% and frozen
endpoint-action $R^2$ from $.673\!\pm\!.062$ to $.791\!\pm\!.003$.  Adding
matched waypoint supervision raises CEM to $61.44\!\pm\!2.04$\%; the larger
Direct improvement primarily identifies the deployable conditional.  The
low-SIGReg waypoint model reaches $69.44\!\pm\!1.64$\% actor-off CEM and
$77.67\!\pm\!.88$\% Direct.  Its lower rank is not collapse: per-dimension
standard deviation remains near one and no dimension collapses.

\paragraph{The taskwise gains isolate the learned interface.}
Replacing the waypoint coordinate by goal displacement changes Direct SR by
$+8.11/+0.11/+9.56/-0.11$ points on PushT/Cube/Reacher/TwoRoom.  The gain is
therefore concentrated in contact-sensitive PushT and continuously actuated
Reacher, while Cube and TwoRoom are already near saturation.  Under
actor-disabled pure CEM, the same replacement improves macro SR by only 1.53
points (79.33 to 80.86); after the learned actor is restored, the Direct gain is
4.41 points and Guarded A reaches 96.86.  This separation matters: the
objective improves the world representation, but its larger benefit comes from
exposing a deployment condition that the shared actor can interpret directly.

Coordinate grammar, SIGReg, straightening, Huber, extra-epoch, interaction,
and planner-radius controls are reported together in
\cref{app:design_selection,app:extended_results}.  They select the final model
but are not repeated as headline experiments.

\FloatBarrier

\subsection{Multi-Task Results}
\label{sec:multitask_results}

We test the linked representation, control, and stability consequences on
matched checkpoints below.

\paragraph{One shared encoder improves all four domains.}
At E5, goal-displacement INTACT reaches
$89.39\!\pm\!.77$ macro Direct SR, versus $66.17\!\pm\!2.67$ for matched
shared-encoder LeWM with CEM $300\times30$.  It improves LeWM by
5.66/32.23/12.56/42.44 points on PushT/Cube/Reacher/TwoRoom and exceeds the
85.75 published task-specific LeWM macro.  Against the matched intent-only
cell, adding the attached physical likelihood raises macro SR by
$8.78\!\pm\!1.63$ points; the gain is positive on three tasks and $-0.44$ on
the saturated Cube task.  This is the cleanest evidence that Full INTACT is not
merely a goal-conditioned behavior-cloning head.

\begin{figure*}[p]
\centering
\includegraphics[width=0.95\textwidth]{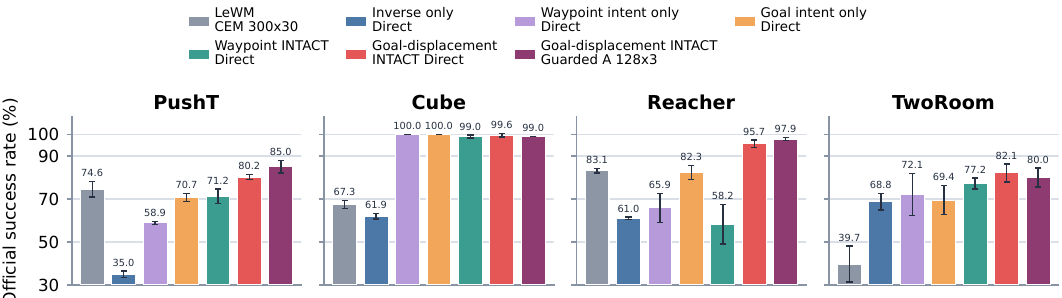}
\caption{\textbf{Controlled E5 success across four tasks.} Bars visualize the
native rows of \cref{tab:multitask_e5}: LeWM uses CEM $300{\times}30$, learned
heads use Direct, and the final bar uses Guarded A $128{\times}3$. Values are
means $\pm$ sample standard deviations over three training seeds. The ordinate
starts at 30\% to expose high-SR differences and is not a zero-based effect size.}
\label{fig:multitask_e5_sr_summary}
\vspace{5pt}
\begin{minipage}{0.98\textwidth}
\centering
\scriptsize
\setlength{\tabcolsep}{4.5pt}
\renewcommand{\arraystretch}{1.20}
\captionof{table}{\textbf{Controlled Math-SDPA shared-encoder E5 result.} Native
interfaces, actor-disabled pure CEM, and optional verification are evaluated
on the same checkpoints. Values are mean $\pm$ sample standard deviation over
three training seeds; each seed averages official evaluation seeds
$\{0,1,42\}$ with 100 episodes each.}
\label{tab:multitask_e5}
\begin{adjustbox}{max width=\textwidth}
\begin{tabular}{llrrrrr}
\toprule
Training cell & Evaluation & PushT & Cube & Reacher & TwoRoom & Macro \\
\midrule
\multicolumn{7}{@{}l}{\emph{Native interface (Direct where available)}} \\
LeWM & CEM $300{\times}30$ & $74.56\!\pm\!3.67$ & $67.33\!\pm\!1.86$ & $83.11\!\pm\!.96$ & $39.67\!\pm\!8.39$ & $66.17\!\pm\!2.67$ \\
Inverse only & Direct & $35.00\!\pm\!1.45$ & $61.89\!\pm\!1.26$ & $61.00\!\pm\!.58$ & $68.78\!\pm\!3.83$ & $56.67\!\pm\!.22$ \\
Waypoint intent only & Direct & $58.89\!\pm\!.69$ & \best{$100.00\!\pm\!.00$} & $65.89\!\pm\!6.71$ & $72.11\!\pm\!9.83$ & $74.22\!\pm\!3.23$ \\
Goal intent only & Direct & $70.67\!\pm\!1.76$ & \best{$100.00\!\pm\!.00$} & $82.33\!\pm\!3.18$ & $69.44\!\pm\!6.75$ & $80.61\!\pm\!2.11$ \\
Waypoint INTACT & Direct & $71.22\!\pm\!3.36$ & $99.00\!\pm\!.67$ & $58.22\!\pm\!9.10$ & $77.22\!\pm\!2.52$ & $76.42\!\pm\!2.32$ \\
\best{Goal-displacement INTACT} & \best{Direct} & \best{$80.22\!\pm\!1.26$} & \best{$99.56\!\pm\!.77$} & \best{$95.67\!\pm\!1.76$} & \best{$82.11\!\pm\!4.11$} & \best{$89.39\!\pm\!.77$} \\
\midrule
\multicolumn{7}{@{}l}{\emph{Actor-disabled pure CEM $300{\times}30$}} \\
LeWM & Pure CEM & $74.56\!\pm\!3.67$ & $67.33\!\pm\!1.86$ & $83.11\!\pm\!.96$ & $39.67\!\pm\!8.39$ & $66.17\!\pm\!2.67$ \\
Inverse only & Pure CEM & $75.78\!\pm\!1.02$ & $66.78\!\pm\!2.36$ & $81.78\!\pm\!1.64$ & $51.44\!\pm\!1.07$ & $68.94\!\pm\!.43$ \\
Waypoint intent only & Pure CEM & $78.44\!\pm\!4.72$ & $68.67\!\pm\!1.76$ & $83.22\!\pm\!2.69$ & $39.11\!\pm\!8.17$ & $67.36\!\pm\!4.06$ \\
Goal intent only & Pure CEM & \best{$80.78\!\pm\!2.36$} & $68.00\!\pm\!3.18$ & \best{$84.67\!\pm\!1.33$} & $40.11\!\pm\!8.30$ & $68.39\!\pm\!2.22$ \\
Waypoint INTACT & Pure CEM & $78.22\!\pm\!3.02$ & $69.56\!\pm\!2.80$ & $83.00\!\pm\!2.19$ & $46.44\!\pm\!3.34$ & $69.31\!\pm\!.68$ \\
\best{Goal-displacement INTACT} & \best{Pure CEM} & $78.11\!\pm\!1.17$ & \best{$70.33\!\pm\!3.00$} & $80.33\!\pm\!2.03$ & \best{$51.56\!\pm\!4.53$} & \best{$70.08\!\pm\!1.13$} \\
\midrule
\multicolumn{7}{@{}l}{\emph{Optional local verification}} \\
\best{Goal-displacement INTACT} & \best{Guarded A $128{\times}3$} & \best{$85.00\!\pm\!2.85$} & \best{$99.00\!\pm\!.00$} & \best{$97.89\!\pm\!.69$} & \best{$80.00\!\pm\!4.41$} & \best{$90.47\!\pm\!.84$} \\
\bottomrule
\end{tabular}
\end{adjustbox}
\par\vspace{5pt}\footnotesize\raggedright
\noindent\textbf{Conclusion.}
Preserving and organizing the state-conditioned intent--action family relation
jointly determines whether actions can be read out directly, whether the
actor-disabled world model remains plannable, whether low-budget search is
effective, and whether shared multi-task training is stable.
\end{minipage}
\vspace{9pt}

\begin{minipage}[t]{0.485\textwidth}
\normalsize\raggedright
\noindent\textbf{The preferred coordinate changes during shared training.}
At E1, Waypoint INTACT is the strongest first-pass cell at
$42.69\!\pm\!7.19$ macro, compared with $36.83\!\pm\!5.63$ for Goal INTACT;
its advantage is concentrated in PushT, Cube, and high-variance TwoRoom. By E5,
the ordering reverses: goal displacement reaches $89.39\!\pm\!.77$ versus
$76.42\!\pm\!2.32$ for waypoint, with per-task differences of
$+9.00/+0.56/+37.45/+4.89$ points. Thus E1 measures optimization speed, whereas
E5 reveals the stronger converged intent coordinate.
\end{minipage}\hfill
\begin{minipage}[t]{0.485\textwidth}
\normalsize\raggedright
\noindent\textbf{Actor-disabled controls separate representation from readout.}
Goal INTACT reaches $70.08\!\pm\!1.13$ pure-CEM macro versus
$66.17\!\pm\!2.67$ for LeWM. Its native Guarded A result is
$90.47\!\pm\!.84$, only $1.08\!\pm\!.14$ above Direct, again supporting search
as a verifier. E1 matrices, all inference modes, CLEAR Moderate scores, and
controlled E1/E5 probes are in
\cref{tab:multitask_core,tab:multitask_inference_decomp,tab:multitask_diagnostics}.
The higher effective rank of E5 waypoint intent-only (93.87 versus 89.26) but
lower SR (74.22 versus 89.39) confirms that rank is not a semantic certificate.
\end{minipage}
\end{figure*}
\clearpage

\begin{table*}[p]
\centering
\small
\setlength{\tabcolsep}{4.5pt}
\renewcommand{\arraystretch}{1.45}
\caption{\textbf{Theory-linked evidence for shared four-task INTACT.}
Correlations use official Direct SR; the actor-disabled and gauge rows are
matched interventions that test representation and correspondence directly.}
\label{tab:multitask_theory_evidence}
\begin{tabularx}{\textwidth}{@{}>{\raggedright\arraybackslash}p{3.0cm}>{\raggedright\arraybackslash}p{7.15cm}>{\raggedright\arraybackslash}X@{}}
\toprule
Evidence & Measured result & Interpretation \\
\midrule
Pred.--expert kNN\newline (45 eligible E1--E5 checkpoints)
& \makecell[l]{Pooled \best{$r=.954$} $[.928,.969]$;\\
adjusted \best{$r=.902$}.\\
Within Waypoint/Goal: \best{$.968/.981$}.}
& Local action-family neighborhoods track deployable control. \\
Pred.--expert linear CKA\newline (same 45 checkpoints)
& \makecell[l]{Pooled \best{$r=.897$} $[.837,.930]$.\\
Within Waypoint/Goal: \best{$.979/.986$}.\\
Leave-one-epoch-out: \best{$[.888,.914]$}.}
& Global family geometry remains predictive across interfaces and epochs. \\
Pointwise action $R^2$\newline (same 45 checkpoints)
& $r=.815$.
& Recovering one expert action is weaker than preserving the family relation. \\
Effective-rank inversion\newline (controlled E5 cells)
& Rank $93.87>89.26$, while SR $74.22<89.39$.
& Latent spread is a capacity check, not a semantic certificate. \\
Actor-disabled planning\newline (matched E5 checkpoints)
& Pure-CEM macro $66.17\rightarrow\mathbf{70.08}$.
& Action losses improve the Encoder--Forward stack before actor execution. \\
Paired gauge intervention\newline (21,600 CLEAR Moderate episodes)
& Shuffled $9.46\%\rightarrow\mathbf{68.04\%}$ paired ($+58.58$ pp).
& Correct task-local correspondence, rather than coordinate identity, restores control. \\
\bottomrule
\end{tabularx}
\vspace{14pt}

\begin{minipage}[t]{0.485\textwidth}
\normalsize\raggedright
\noindent\textbf{Matched interfaces separate representation, readout, and search.}
With the actor disabled, Goal INTACT changes LeWM by
$+3.55/+3.00/-2.78/+11.89$ points on PushT/Cube/Reacher/TwoRoom: representation
shaping is positive overall but not uniformly sufficient.  Restoring the
native Direct interface raises macro SR from 70.08 to 89.39, and adding Guarded
A contributes only another $1.08\!\pm\!.14$ points.  Conversely, Full Goal
INTACT exceeds goal-intent-only Direct by 9.55/$-0.44$/13.34/12.67 points.  The
three comparisons jointly rule out a single-factor explanation: physical
likelihood improves the Encoder--Forward stack, shared intent readout supplies
most closed-loop control, and low-budget search performs a final local check.

\medskip
\noindent\textbf{Summary.}
Physical inversion improves action-recoverable world features; deployment
intent identifies a zero-search action interface; sharing forces both condition
families to be interpreted by one conditional law.
\end{minipage}\hfill
\begin{minipage}[t]{0.485\textwidth}
\normalsize\raggedright
Goal displacement is the
strongest final coordinate without forcing latent trajectories to be linear.
The single- and multi-task results agree: Direct is the native interface,
actor-disabled planning remains useful, and small search is best understood as
optional verification.

\medskip
\noindent\textbf{Why these measurements follow the theory.}
The quotient-to-law view in \cref{prop:quotient_image,prop:proper} predicts that
control should follow agreement between the predicted and expert action-law
families. kNN and CKA operationalize its local neighborhoods and global centered
geometry; pointwise $R^2$ deliberately asks the weaker single-action question.
Effective rank then serves as a capacity falsifier, actor-disabled planning
tests representation before readout, and the paired gauge intervention tests
correspondence while allowing coordinates to change. Their observed ordering
matches these roles: family metrics are strongest, higher rank can accompany
lower SR, physical supervision improves pure CEM, and correct pairing restores
58.58 points without requiring coordinate identity.
\end{minipage}
\end{table*}
\clearpage

\begin{figure*}[p]
\centering
\includegraphics[width=0.96\textwidth]{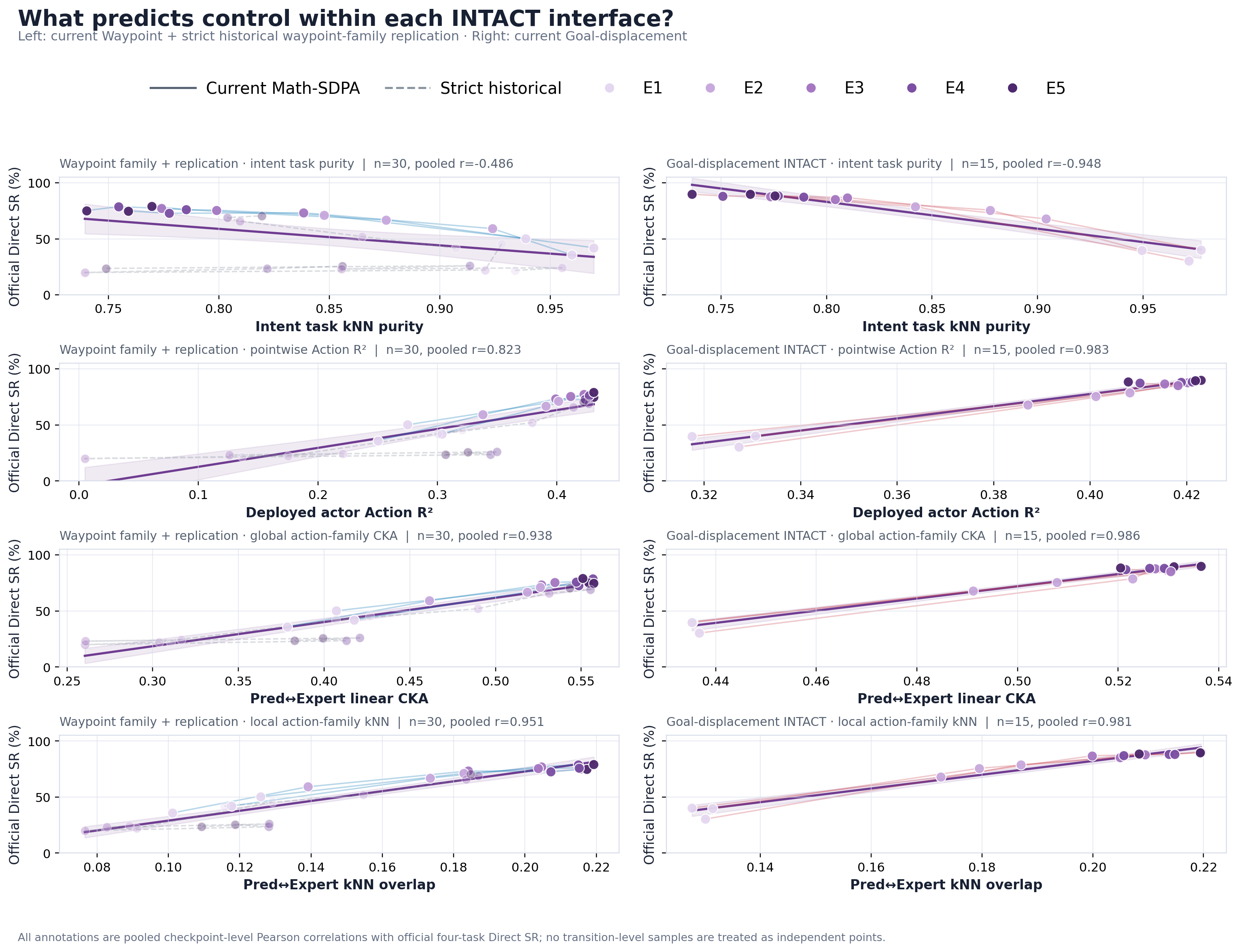}
\caption{\textbf{Intent--action relation, rather than task clustering, tracks
control.} Columns separate waypoint and goal-displacement interfaces; rows
compare task purity, pointwise action $R^2$, predicted--expert CKA, and local
kNN overlap.  Each point is one checkpoint and each annotation is a pooled
checkpoint-level Pearson correlation with official four-task Direct SR.
Historical runs are lower opacity and never rank the controlled objectives.}
\label{fig:intent_action_alignment}
\vspace{3pt}

\begin{minipage}[t]{0.485\textwidth}
\small\raggedright
\noindent\textbf{Family conversion predicts closed-loop control.}
Across the 30 primary Math-SDPA Full-INTACT checkpoints, predicted--expert CKA
and local kNN overlap correlate with official Direct SR at $r=.861$ and $.926$.
Within complete waypoint/goal learning trajectories, CKA reaches
$.979/.986$ and kNN $.968/.981$.  Pooling the separately labeled strict
replication cohort gives CKA $r=.897$ (95\% run-cluster bootstrap CI
$[.837,.930]$) and kNN $r=.954$ ($[.928,.969]$); adjusted kNN remains $.902$.
By contrast, intent task purity is negatively associated with SR.  Cleaner
task clusters are therefore not the desired relation: what matters is whether
the state-conditioned intent family is carried into the corresponding action
family.
\end{minipage}\hfill
\begin{minipage}[t]{0.485\textwidth}
\small\raggedright

\noindent\textbf{Gauge intervention separates correspondence from actor function.}
Correctly paired task-local calibration recovers 68.04\% CLEAR Moderate SR,
whereas identity and shuffled-pair maps obtain 7.96\% and 9.46\%.  Same-
objective swaps define a high alignment ceiling, and LeWM/inverse backbones can
also align into a trained Full actor.  The reverse direction is decisive: a
Full backbone mapped into a LeWM actor remains at 5.08\%, while an inverse actor
recovers only inverse-level functionality.  A coordinate map can restore a
learned conditional but cannot create one absent from the target actor.  The
72-cell, 21,600-episode audit and calibration curves are in
\cref{fig:task_conditioned_gauge,app:extended_results}.
\end{minipage}
\par\vspace{6pt}
\begin{minipage}{0.98\textwidth}
\small\raggedright
\noindent\textbf{The eight panels reject a clustering-only explanation.}
Across both deployment coordinates, task purity has the wrong sign, pointwise
action recovery is informative but weaker, and relation-aware CKA/kNN give the
most consistent monotone trends. The agreement between current Math-SDPA runs
and the separately marked historical cohort further shows that the result is
not produced by one checkpoint family. Together with the gauge intervention,
the figure supports a narrower claim than global latent isomorphism: successful
control requires each state-conditioned intent neighborhood to retain the
action-family correspondence that its task-specific head can read.
\end{minipage}
\end{figure*}

\clearpage

\section{Limitations}
\label{sec:limitations}

\paragraph{Three seeds remain a coarse scaling estimate.}
The controlled multi-task matrix is complete for all six objective cells,
three training seeds, three evaluation seeds per checkpoint, and E1--E5 under
Math-SDPA. Official Direct, pure CEM, Guarded A, and CLEAR Moderate have full
applicable coverage. The primary conversion audit is likewise complete for
both Full INTACT interfaces, three seeds, and E1--E5. The separately marked
historical replication cohort predates the backend audit and is used only for
mechanism robustness, not objective ranking. The final single-task
goal-displacement study now uses
three training seeds on all four tasks, but three seeds still provide only a
coarse estimate of training variability.

\paragraph{The action quotient is identified only on demonstrated support.}
\Cref{prop:proper} applies to endpoint conditions with positive sampling
probability. It does not identify arbitrary counterfactual goals, nor does a
single expert action reveal whether another action would have been equally
valid. The full goal displacement preserves direction and distance but remains
an approximate request around obstacles, contact switches, and multimodal
demonstrations. The local-intent call physically anchors the
representation, while the goal-intent call relies on demonstration support;
neither construction proves correct behavior outside that support.
Using the same goal-intent construction and INTACT Predictor at training and deployment removes
the forward-model/action-proposal interface mismatch, not every train--test
gap: after the first direct step, autoregressive control conditions on predicted
latents whose distribution can drift from encoded demonstration states.

\paragraph{The current diagnostics are necessary checks, not certificates.}
Effective rank and SRS can reward isotropic noise. $G_{\mathrm{sep}}$ measures
response to deterministic goal changes, but two different goals can require
the same current action, so it is only a proxy for an action-distinct quotient.
Expert NLL shares the INTACT Predictor used by Direct and is therefore not an independent
causal variable. A stronger test would pair action-equivalent and
action-distinct counterfactual conditions, report normalized Jacobian spectra,
and predict held-out checkpoint failures without refitting thresholds. The
t-SNE geometry is qualitative and is not used for model selection.

\paragraph{Direct Gaussian control can hide multimodality.}
The deployed action is the mean of a diagonal Gaussian. At junctions or contact
transitions, this mean can lie between valid modes. Search can verify a proposal
against forward dynamics, but then loses the zero-candidate identity. Mixture
actors, sampling from the INTACT Predictor's learned conditional uncertainty instead of a
unit-variance raw-action distribution, uncertainty-triggered verification, and
quotient-aware mode selection are natural extensions; our current evidence
supports a simple cross-task default rather than a universal action
distribution.

\paragraph{Gauge equivalence is task-manifold conditioned.}
The base Full--goal-only swap uses two training seeds; the expanded 72-cell
audit strengthens causal controls but does not establish exact conjugacy or a
single global affine chart. Correct pairing is necessary, and same-objective
cross-seed swaps define a high alignment ceiling. However, LeWM and
inverse-only backbones can also be aligned into a trained Full actor, so
alignability is not an INTACT-exclusive representation property. Conversely,
reverse swaps show that a gauge cannot create a deployment map absent from the
target actor. Cross-task and leave-one-task-out transfer still fail. This
task-local scope is sufficient for task-specific control heads; a single
universal actor would require explicit cross-task alignment, adapters, or
stronger joint coverage.

\paragraph{Scope and evaluation.}
Experiments use four simulated tasks, fixed image goals, task-specific action
heads, and offline expert trajectories. We have not established real-robot,
out-of-distribution goal, distractor, or cross-embodiment generalization.
The present headline uses the official LeWM benchmark to preserve direct
comparability. Our stricter legality-aware audit is reported separately and
should be validated independently. Finally, PRISM, GC-IDM, SMWM, QuoVLA, and related concurrent systems use
different data, training schedules, and evaluators. We therefore compare their
interfaces and assumptions, not unmatched success-rate numbers.

\FloatBarrier
\section{Conclusion}

INTACT starts from a mismatch that is easy to overlook: a physical
successor is a valid inverse-dynamics condition but is unavailable before
acting, whereas a deployable goal intent is available but is not itself an observed
transition. Their useful commonality is neither endpoint type nor Euclidean
proximity. At a fixed world state, it is the expert action law they induce. This
conditional action quotient leads to a deliberately small statistical unit:
each supervised conditional consists of one demonstrated state, one supported
endpoint condition, and one correct action. The forward JEPA and SIGReg remain responsible for preserving the
richer world representation. In the full objective, the two action losses also
remain distinct: local-intent likelihood shapes the encoder through the attached real
successor, whereas goal-intent likelihood shapes the current-state encoding toward
the deployable conditional and has no successor-gradient path. This is a
branch-local stop-gradient, not a detached batch or encoder: the same future
latent is plastic when used as a physical successor and fixed when reused as a
deployment anchor.
Taken separately, these are familiar objectives: local inverse dynamics and a
GCSL-like hindsight action likelihood. INTACT begins at their irreducible
coupling: one weight-shared, graph-isomorphic conditional operator must explain
both families in action-law space, while asymmetric endpoint gradients preserve
their distinct physical and deployment roles and the Forward Predictor retains
richer world information. The method therefore does not reduce to placing two
losses beside each other.

The controlled evidence supports the resulting predictions. In single-task
training, the final goal-displacement model reaches 85.78\%, 100.00\%, 97.67\%,
and 97.89\% official Direct SR after one epoch with no candidate search, with
three training seeds on every task.  The completed $2\times4$ task-specific
matrix further shows that goal displacement exceeds the waypoint coordinate
under Direct, pure CEM, actor-on CEM, and Guarded A.  On the final coordinate,
actor-on CEM spends 9,000 sequences yet falls from 95.33\% Direct macro to
93.78\%, whereas plan-centered Guarded A reaches 96.86\% with 384 sequences.
Controlled head-grammar interventions
show a 7.00-point A$\rightarrow$E gain along the matched-product path. The new
G control improves A by only 0.89 points when centering the product alone,
whereas matching the centered first-order intent with its product yields a
3.89-point residual--product interaction; under the final goal coordinate,
removing that matched product costs 2.56 points. Earlier matched PushT attribution shows that the waypoint
encoder reaches $69.44\!\pm\!1.64$\% with the actor removed under pure CEM
$30{\times}10$, separating representation shaping from proposal quality. In
shared four-task training, the deterministic Math-SDPA factorial gives Full
INTACT the strongest E1 Direct macro, $42.69\!\pm\!7.19$\%, and lower TwoRoom
variation than waypoint-intent-only. At E5, Goal-displacement INTACT reaches
$89.39\!\pm\!0.77$\% Direct macro and exceeds matched three-seed
goal-intent-only by $8.78\!\pm\!1.63$ points. Actor-disabled pure CEM improves
from $66.17\!\pm\!2.67$\% for LeWM to $70.08\!\pm\!1.13$\%, while Guarded A
raises the Full model to $90.47\!\pm\!0.84$\%; search therefore remains a
compatible verifier rather than the learned interface. Across 45 eligible simultaneous-dual-likelihood
E1--E5 checkpoints, predicted--expert CKA/kNN correlate with official Direct SR
at $r=0.897/0.954$; kNN remains $r=0.902$ after epoch/cohort adjustment. This
mechanism audit complements, rather than replaces, the controlled performance
comparison.
On matched PushT waypoint training, parameter-matched independent actors lose
5.67 Direct points, and even double capacity remains 4.00 points below the
fully shared operator. This directly supports sharing as a semantic coupling,
not a parameter-count accident. The A--G head grammar supplies a complementary
intra-call constraint: the first-order intent and its state--intent bilinear
interaction must carry matched semantics inside each call. The deployment loss
is still a goal-conditioned imitation likelihood; these controls show why the
complete method is not reducible to goal-only or post-hoc imitation: it couples
the GCSL-like deployment likelihood end to end with JEPA dynamics and the same
shared conditional operator on attached one-chunk physical successors.

The 21,600-episode gauge audit supplies a direct causal check. Correct pairing
recovers 68.04\% CLEAR Moderate SR, versus 9.46\% after pair shuffling, and the
calibration curve saturates near 64/full episodes. Same-objective swaps set a
high alignment ceiling, while LeWM/inverse backbones also align into a trained
Full actor; alignability is therefore not unique to INTACT. In the reverse
direction, Full-backbone-to-LeWM-actor remains at 5.08\%, and an inverse actor
recovers only inverse-level functionality. A coordinate gauge can restore a
learned map but cannot invent a deployment conditional absent from the actor.

The complete 576-job planner audit adds a compatible efficiency result.
Guarded A centers a $128{\times}3$ raw-action residual search on the
deterministic INTACT plan and reaches 96.86\% macro and 92.22\% worst-task SR,
exceeding matched pure CEM $300{\times}30$ by 16.00 macro points with
23.44$\times$ fewer sampled candidate sequences. Ordinary
$\sigma_0=0.25$ is only 0.08 macro point lower. A diagnostic L2-trust variant
raises PushT to 92.78\% but is less uniform across tasks, so it is not part of
the recommended inference interface. The finite correction region improves
contact-sensitive PushT without discarding the precision already present in
Reacher, whereas iterative actor-particle sampling compounds mixture and model
error.

The broader lesson is that amortized control need not be attached after world
model learning or reduced to a planner-side sampling trick. A single JEPA can
retain predictive world state, learn which conditions are equivalent for the
current action, and expose that conditional directly at deployment. Search then
becomes an optional verifier rather than the mandatory interface to the model.
At a more general level, the construction is not tied to JEPA: whenever two
condition families share supervised behavioral labels, a shared proper
conditional operator can align their quotient semantics while asymmetric
gradient routing preserves the role of each family. INTACT therefore offers a
template for relation-preserving family mappings beyond world models, without
requiring pointwise coordinate collapse.

\bibliographystyle{unsrtnat}
\bibliography{references}

\clearpage
\appendix
\section*{Appendix}
\addcontentsline{toc}{section}{Appendix}
\setcounter{figure}{0}
\renewcommand{\thefigure}{A\arabic{figure}}
\renewcommand{\theHfigure}{appendix.\arabic{figure}}
\setcounter{table}{0}
\renewcommand{\thetable}{A\arabic{table}}
\renewcommand{\theHtable}{appendix.\arabic{table}}
\section{Proofs and Statistical Interpretation}
\label{app:proofs}
\renewcommand{\qedsymbol}{}

\begin{proof}[Proof of \cref{prop:nonidentifiability}]
Let $S=\operatorname{supp}(z,z^+)$. The successor-only risk is an integral of
$-\log p_\eta(a\mid z,y)$ over triples with $(z,y)\in S$. If two actors agree
on $S$, their integrands and therefore their risks are equal almost everywhere.
Changing either actor on a goal intent $m$ outside $S$ cannot change that integral.
Thus the successor-only objective does not identify deployment behavior at
$m$.
\end{proof}

\begin{proof}[Proof of \cref{prop:proper}]
Condition on an endpoint pair $(z,y)$ with positive probability. Its expected
negative log likelihood decomposes as
\begin{equation}
\begin{aligned}
&\mathbb E_{a\sim p_E^\star(\cdot\mid z,y)}
  [-\log p_\eta(a\mid z,y)]\\[-1pt]
&\qquad=H[p_E^\star(\cdot\mid z,y)]
  +D_{\mathrm{KL}}(p_E^\star\|p_\eta).
\end{aligned}
\end{equation}
The entropy is constant in $\eta$ and the KL term is nonnegative, with equality
only when the modeled and expert conditional laws agree almost everywhere.
Applying this argument to every supported endpoint proves the result. If
$y\sim_z y'$, both recovered laws equal the same expert law; if the expert laws
differ, a proper likelihood preserves that distinction.
\end{proof}

For a Bernoulli endpoint sample, linearity of expectation gives
\cref{eq:endpoint_nll}. A paired estimator of
$(1-p)\ell(a;z^+)+p\ell(a;z+m^{\rm goal})$ can have the same population expectation. Our
claim is consequently not that sampling changes the population optimum by
itself. It keeps one conditional observation per transition, fixes the
action-loss scale, and changes finite-batch gradient covariance under adaptive
optimization. The paired controls test this finite-sample prediction.

\section{Implementation and Evaluation Identities}
\label{app:implementation}

The encoder is ViT-Tiny/14 followed by a two-layer projection to 192 dimensions.
The forward model is a six-layer causal transformer with 16 attention heads and
history length three. One model action contains five environment actions. The
shared INTACT Predictor is a three-layer width-1024 MLP with LayerNorm and GELU; it emits
a diagonal-Gaussian mean and log standard deviation clipped to $[-5,2]$.
Images use ImageNet normalization and actions use training-split statistics.

The backbone has 18.03M trainable parameters and the shared INTACT Predictor adds 3.11M.
Task heads are separate; the encoder and projector are shared across all four
tasks. Training calls \texttt{seed\_everything(seed, workers=True)} before
dataset and model construction. Checkpoints are loaded in strict mode and every
evaluation records the checkpoint hash, manifest hash, evaluator fingerprint,
task, protocol, solver identity, and seed.

\begin{table}[t]
\centering
\scriptsize
\caption{Inference identities enforced by the evaluator. A zero in Direct is
an assertion checked from the trace, not merely a requested setting.}
\label{tab:inference_identity}
\begin{adjustbox}{max width=\columnwidth}
\begin{tabular}{lccc}
\toprule
Mode & Actor & Candidates/iter. & Cost calls \\
\midrule
Direct & \cmark & 0 & 0 \\
Pure CEM $5{\times}2$ & \xmark & 5 per iteration & $>0$ \\
Pure CEM $30{\times}10$ & \xmark & 30 per iteration & $>0$ \\
Pure CEM $300{\times}30$ & \xmark & 300 per iteration & $>0$ \\
\bottomrule
\end{tabular}
\end{adjustbox}
\end{table}

\section{Extended Method Derivations and Control Pseudocode}
\label{app:method_derivations}

\subsection{Waypoint route and equal-progress assumption}
The waypoint attribution route solves, at teacher step $k$ with $r=H-k$,
\begin{equation}
\min_{\delta_k,\ldots,\delta_{H-1}}
\sum_{j=k}^{H-1}\|\delta_j\|_2^2
\quad\text{s.t.}\quad
\sum_{j=k}^{H-1}\delta_j=z_g-z_{t+k}.
\label{eq:query_variational}
\end{equation}
Strict convexity gives the equal-progress request
\begin{equation}
m_{t+k,H-k}^{\rm waypoint}
=\frac{\operatorname{sg}(z_g)-z_{t+k}}{H-k},
\label{eq:query}
\end{equation}
implemented in the endpoint grammar as
\begin{equation}
q_{t+k,H-k}=z_{t+k}+m_{t+k,H-k}^{\rm waypoint}.
\label{eq:waypoint_endpoint}
\end{equation}
It is paired with the demonstrated one-chunk action,
\begin{equation}
\mathcal L_{\rm goal}^{\rm waypoint}
=\mathbb E\!\left[-\log p_\eta\!\left(
a_{t+k}\,\middle|\,
\substack{z_{t+k},q_{t+k,H-k},\\a_{t+k-1}}
\right)\right].
\label{eq:query_nll}
\end{equation}
This route narrows train--deployment support but divides away remaining-distance
magnitude.  The final raw-goal route instead uses
$m_t^{\rm goal}=\operatorname{sg}(z_g)-z_t$ and lets the nonlinear actor choose
state-dependent progress.

The two routes share the same branch-local gradient rule.  The future endpoint
is detached only in the deployment occurrence; it remains attached when it is
the real successor of the last physical transition.  Neither a complete batch
nor the encoder is detached.

\subsection{Why intent coordinates do not straighten trajectories}
A realized transition under the goal-conditioned actor may be written
descriptively as
\begin{equation}
\Delta z_t^{\rm real}
=\alpha_t(z_t,m_t^{\rm goal},a_{t-1})m_t^{\rm goal}+r_t,
\label{eq:adaptive_progress}
\end{equation}
where $r_t$ can contain detours, contact changes, and multimodal residuals.
The Forward Predictor updates reachability and the goal residual,
\begin{align}
&a_t\sim\pi_\eta(\cdot\mid z_t,m_t^{\rm goal},a_{t-1}),\\
&\widehat z_{t+1}=F_\phi(\widehat z_t,a_t),\\
&m_{t+1}^{\rm goal}=z_g-\widehat z_{t+1}.
\label{eq:adaptive_rollout}
\end{align}
This does not impose the Euclidean trajectory penalty
\begin{equation}
\begin{aligned}
\mathcal L_{\rm lin}
&=\sum_{k=1}^{H-1}
\left\|z_{t+k}-[(1-\alpha_k)z_t+\alpha_kz_g]\right\|_2^2,\\
\alpha_k&=k/H,
\end{aligned}
\label{eq:direct_linearization}
\end{equation}
whose matched negative control appears in \cref{app:linearity_control}.

\subsection{Alternative endpoint sampling and end-to-end necessity}
One may draw $b\sim\operatorname{Bernoulli}(p)$ and use one endpoint per
transition,
\begin{align}
&y=(1-b)z^++b(z+m^{\rm goal}),\\
&\mathcal L_{\rm end}
=\mathbb E[-\log p_\eta(a\mid z,y,a_{t-1})].
\label{eq:endpoint_nll}
\end{align}
Its expectation is
$(1-p)\mathcal L_{z^+}+p\mathcal L_{\rm goal}$, although paired evaluation of
both calls changes finite-batch gradient covariance.  More importantly, if
$E(o)=E(o')$ for observations requiring different action laws under the same
intent, no frozen downstream actor can undo that aliasing.  Backpropagating the
proper likelihood through $E$ exposes the missing controllable distinction
during representation learning.

\subsection{Endpoint grammar and interaction prediction}
The waypoint experiments use
\begin{equation}
h_t^{\rm end}(y_t)=[z_t;y_t;z_t\odot y_t;A_\omega(a_{t-1})].
\label{eq:actor_features_endpoint}
\end{equation}
For an endpoint $q_t=z_t+\delta_t$,
$z_t\odot q_t=z_t\odot z_t+z_t\odot\delta_t$.  The controlled A--G grammar
study distinguishes a matched displacement/product pair from changing only
the product slot.  If the product acts as a state--intent interaction, paired
Direct scores obey the finite-capacity prediction
\begin{equation}
\begin{aligned}
&\Delta_{\rm match}=(S_C-S_D)-(S_A-S_B)>0,\\
&S_E-S_F>0,
\end{aligned}
\label{eq:product_prediction}
\end{equation}
while $S_C>S_G$ rejects the explanation that centering the product alone is
sufficient.  The complete experiment is in \cref{app:design_selection}.

\subsection{General endpoint conditioning}
For heterogeneous tasks, a more general endpoint design may expose an optional
relation $r$,
\begin{equation}
\mathcal L_{\rm act}^{\tau}(\nu_\tau)
=\mathbb E_{(z,a),(y,r)\sim\nu_\tau}
[-\log p_{\eta_\tau}(a\mid z,y,r,a_{t-1})].
\label{eq:general_endpoint}
\end{equation}
The final raw-goal policy is not explicitly horizon-conditioned: training spans
remaining offsets $r=1,\ldots,5$, but its input contains only the current state,
raw goal displacement, interaction feature, and previous action.

\section{Design Selection and Ablation Logic}
\label{app:design_selection}

This appendix separates model definition from design selection. Each study is
organized as a question, a matched intervention, and the conclusion used to
fix the final configuration; none changes the headline evaluation protocol.

\subsection{Which intent coordinate should the shared operator receive?}
The waypoint endpoint $q_t$, centered waypoint displacement $q_t-z_t$, and
full goal displacement $z_g-z_t$ successively remove an unnecessary absolute
coordinate and the hand-set equal-progress scale. Product-off controls test
whether the gain comes from the coordinate or its state-dependent interaction;
G centers only the interaction while leaving the endpoint main effect intact.
All cells use one full-data pass, three training seeds, the same evaluation
episodes, and deterministic Math-SDPA.

\begin{table}[t]
\centering
\scriptsize
\caption{PushT intent-coordinate grammar under the matched three-seed protocol.
Values are official Direct SR (\%).}
\label{tab:intent_grammar}
\begin{adjustbox}{max width=\columnwidth}
\begin{tabular}{clcr}
\toprule
ID & Main condition & Interaction & Direct SR \\
\midrule
A & $q_t$ & $z_t\odot q_t$ & $78.78\!\pm\!0.96$ \\
B & $q_t$ & omitted & $80.56\!\pm\!0.96$ \\
G & $q_t$ & $z_t\odot(q_t-z_t)$ & $79.67\!\pm\!1.76$ \\
C & $q_t-z_t$ & $z_t\odot(q_t-z_t)$ & $83.11\!\pm\!1.39$ \\
D & $q_t-z_t$ & omitted & $81.00\!\pm\!1.45$ \\
F & $z_g-z_t$ & omitted & $83.22\!\pm\!0.51$ \\
E & $z_g-z_t$ & $z_t\odot(z_g-z_t)$ & \best{$85.78\!\pm\!1.84$} \\
\bottomrule
\end{tabular}
\end{adjustbox}
\end{table}

The matched endpoint-to-centered change A$\rightarrow$C is +4.33 points and
the centered-to-full-goal change C$\rightarrow$E is +2.67 points; A$\rightarrow$E
is +7.00 points (exact $p=4.17\times10^{-9}$). The interaction prediction also
closes quantitatively: the matched difference-in-differences is +3.89 points,
$S_E-S_F=+2.56$, and $S_C-S_G=+3.44$. G shows that merely deleting the
$z_t\odot z_t$ component is insufficient. The supported inductive bias is the
matched intent-dependent sub-grammar $[z_t,m_t,z_t\odot m_t]$ within the full
four-slot input; the shared action-history slot $A_\omega(a_{t-1})$ remains
unchanged. Here $m_t$ explicitly carries the intended displacement.

\begin{figure*}[t]
\centering
\includegraphics[width=0.99\textwidth]{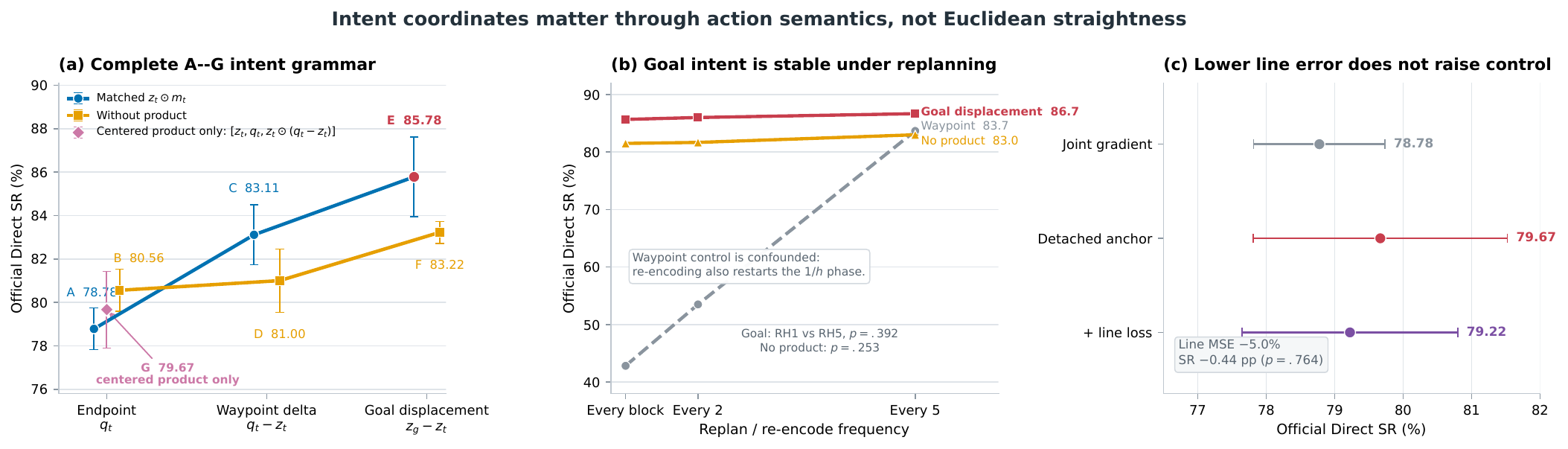}
\caption{\textbf{Controls used to select the intent grammar.} (a) Coordinate
and interaction interventions. (b) Re-encoding every 1, 2, or 5 action blocks.
(c) Detaching the goal anchor and explicitly reducing chord error separate
action-field learning from Euclidean latent straightening. Error bars are
training-seed standard deviations.}
\label{fig:single_condition_geometry}
\end{figure*}

\subsection{Does the result require trajectory straightening or a particular optimizer?}
For the selected full-goal variants, re-encoding intervals 1/2/5 give
85.67/86.00/86.67\% for E and 81.50/81.67/83.00\% for F; the endpoint
differences are not significant ($p=0.392/0.253$). The waypoint curve is not a
clean drift test because every reset also restarts its $1/h$ phase.

Joint goal-intent gradients reach 78.78\%; detaching that branch reaches
79.67\%. Adding a weight-0.01 strict-dynamic SmoothL1 chord penalty lowers
validation chord MSE from 0.2173 to 0.2064 but gives 79.22\% SR
($p=0.764$). A stronger flexible linearity weight of 0.3 collapses Direct SR to
14.11\%. Geometry can therefore become visibly straighter without becoming a
better controller.

The waypoint SIGReg sweep gives 78.56/77.67/76.56\% at 0.02/0.03/0.04 and
68.22\% at 0.09. We use the stable low-regularization region rather than claim
monotonic benefit. A single-seed Huber screening run obtains 55.0\%, whereas
the formal probabilistic NLL settings obtain $68.22\!\pm\!1.26$\% and
$77.67\!\pm\!0.88$\%; protocols are not paired, so this only rejects the claim
that pointwise regression is already known to be preferable. Forward weights
1.0/0.3/0.1/0 give 77.67/73.11/59.56/25.22\%, confirming that the action
interface complements rather than replaces world prediction.

\subsection{Why is Guarded A the optional verifier?}
\Cref{fig:planner_matrix} visualizes the parameter-selection audit; the full
576-job table is in \cref{tab:planner_matrix_full}. PushT benefits from a finite
correction radius, rising from 85.78\% Direct to 92.33\% at $\sigma_0=0.25$,
whereas Reacher degrades at larger radii. Guarded A retains the Direct reference
and global best, reaches 96.86\% macro with 384 sampled sequences, and is
statistically near the simpler unguarded $\sigma_0=0.25$ rule at 96.78\%.
Doubling to $K=256$ reaches 96.83\% and provides no resolved gain. Hence Direct
is the primary interface and Guarded A is a bounded verifier, not a replacement
controller.

\begin{figure*}[t]
\centering
\includegraphics[width=0.99\textwidth]{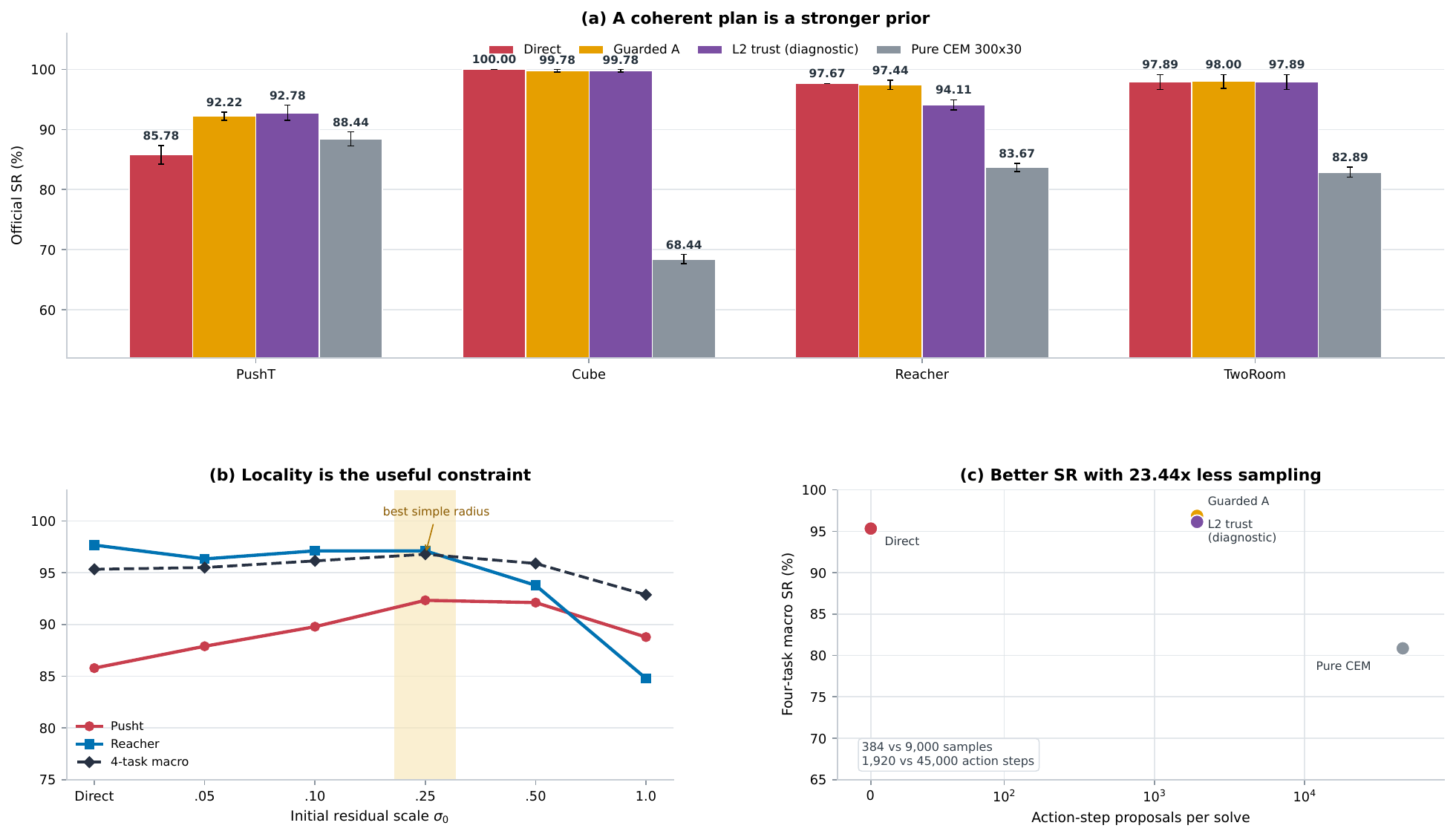}
\caption{\textbf{Verifier-selection controls.} The complete three-seed audit
compares Direct, guarded and unguarded plan-centered search, trust penalties,
actor particles, and actor-disabled CEM.}
\label{fig:planner_matrix}
\end{figure*}

\subsection{Qualitative multi-task geometry}
The fixed-sample projections below illustrate collapse, partial expansion, and
healthy SIGReg-supported spread. A spherical or shell-like t-SNE projection is
not a success criterion: it is a nonlinear view of a high-dimensional cloud
and is reported only to diagnose gross failure morphology.

\begin{figure*}[t]
\centering
\includegraphics[width=0.99\textwidth]{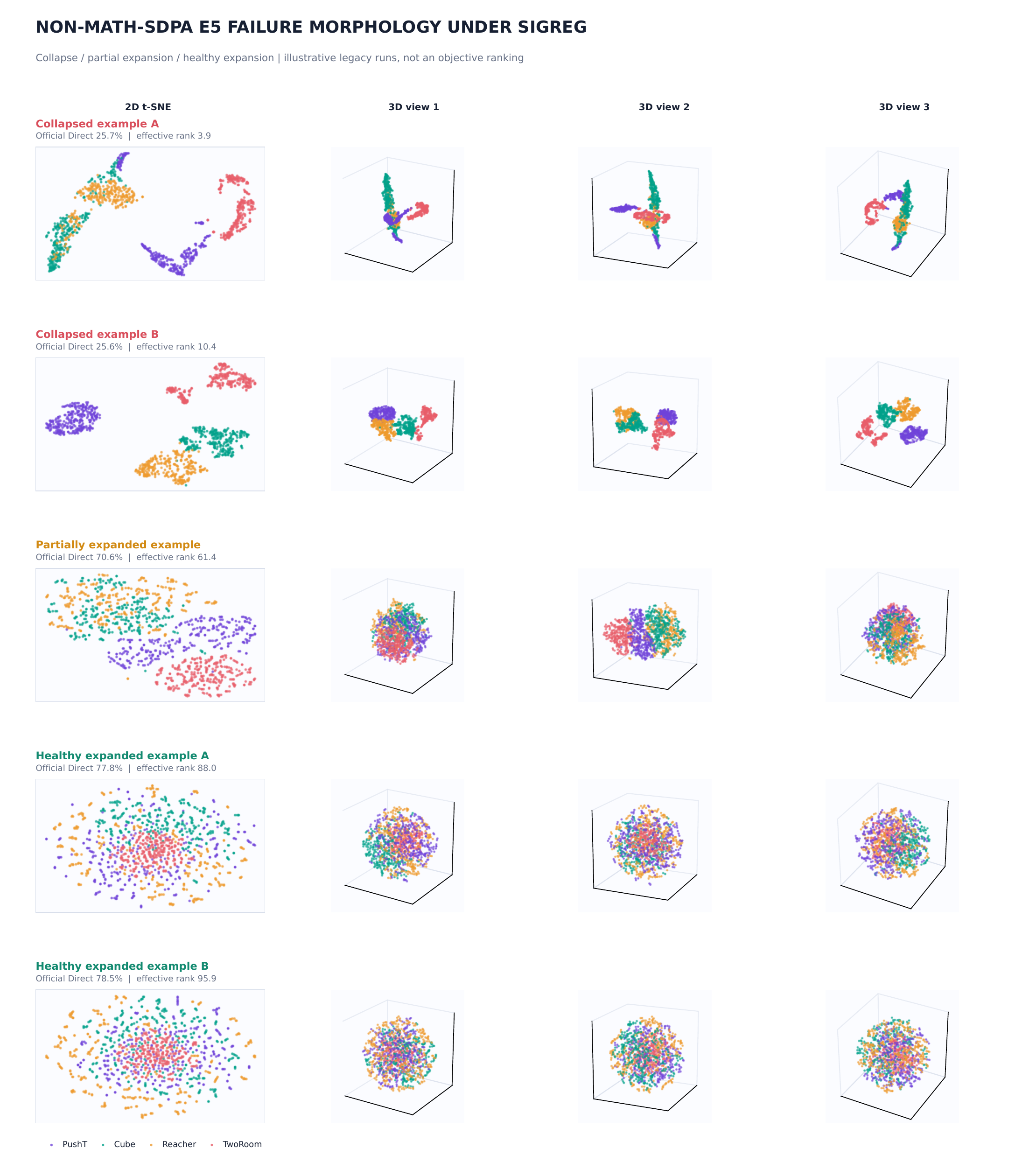}
\caption{Exploratory E5 collapse morphologies from fixed samples. These legacy
non-Math-SDPA runs are qualitative diagnostics and do not enter performance
ranking or headline correlations.}
\label{fig:multitask_geometry}
\end{figure*}

Across these controls, the selection rule is consistent: preserve the world
objective, expose a distance-preserving intent coordinate and its matched
state interaction, learn both condition families through one probabilistic
operator, and add search only as bounded local verification.

\section{Extended Controlled Results}
\label{app:extended_results}

This section preserves the complete protocol tables and selection evidence
summarized in the main text.  It contains no additional model selection based
on test outcomes: the single-task grammar and verifier were locked before the
final matrices, and every multi-task row uses the audited deterministic
Math-SDPA contract.

\subsection{Single-task evaluator and attribution details}
\begin{table*}[t]
\centering
\scriptsize
\caption{\textbf{Official versus CLEAR-LeWM v0.5.1 Moderate single-task SR
(\%).} Every row evaluates the same three one-epoch goal-displacement
checkpoints per task; each checkpoint averages evaluation seeds 0/1/42 with
100 episodes per seed.  Values are mean $\pm$ sample standard deviation across
training seeds.  Protocol scores are never pooled.}
\label{tab:clear_main}
\begin{adjustbox}{max width=\textwidth}
\begin{tabular}{llrrrrr}
\toprule
Protocol & Deployment & PushT & Cube & Reacher & TwoRoom & Macro \\
\midrule
Official & Pure CEM $300{\times}30$ & $88.44\!\pm\!1.17$ & $68.44\!\pm\!.77$ & $83.67\!\pm\!.67$ & $82.89\!\pm\!.84$ & $80.86\!\pm\!.51$ \\
Official & Direct & $85.78\!\pm\!1.54$ & $100.00\!\pm\!.00$ & $97.67\!\pm\!.00$ & $97.89\!\pm\!1.26$ & $95.33\!\pm\!.58$ \\
\best{Official} & \best{Guarded A $128{\times}3$} & \best{$92.22\!\pm\!.69$} & \best{$99.78\!\pm\!.19$} & \best{$97.44\!\pm\!.77$} & \best{$98.00\!\pm\!1.15$} & \best{$96.86\!\pm\!.38$} \\
\midrule
CLEAR Moderate & Pure CEM $300{\times}30$ & $90.22\!\pm\!2.27$ & $49.44\!\pm\!3.60$ & $50.11\!\pm\!1.17$ & $72.11\!\pm\!2.79$ & $65.47\!\pm\!.77$ \\
CLEAR Moderate & Direct & $88.22\!\pm\!.39$ & $99.89\!\pm\!.19$ & $49.56\!\pm\!2.52$ & $95.67\!\pm\!2.33$ & $83.33\!\pm\!1.00$ \\
\best{CLEAR Moderate} & \best{Guarded A $128{\times}3$} & \best{$92.78\!\pm\!1.71$} & \best{$99.44\!\pm\!.20$} & \best{$51.67\!\pm\!3.18$} & \best{$95.44\!\pm\!2.22$} & \best{$84.83\!\pm\!1.32$} \\
\bottomrule
\end{tabular}
\end{adjustbox}
\end{table*}

\begin{figure*}[t]
\centering
\includegraphics[width=0.98\textwidth]{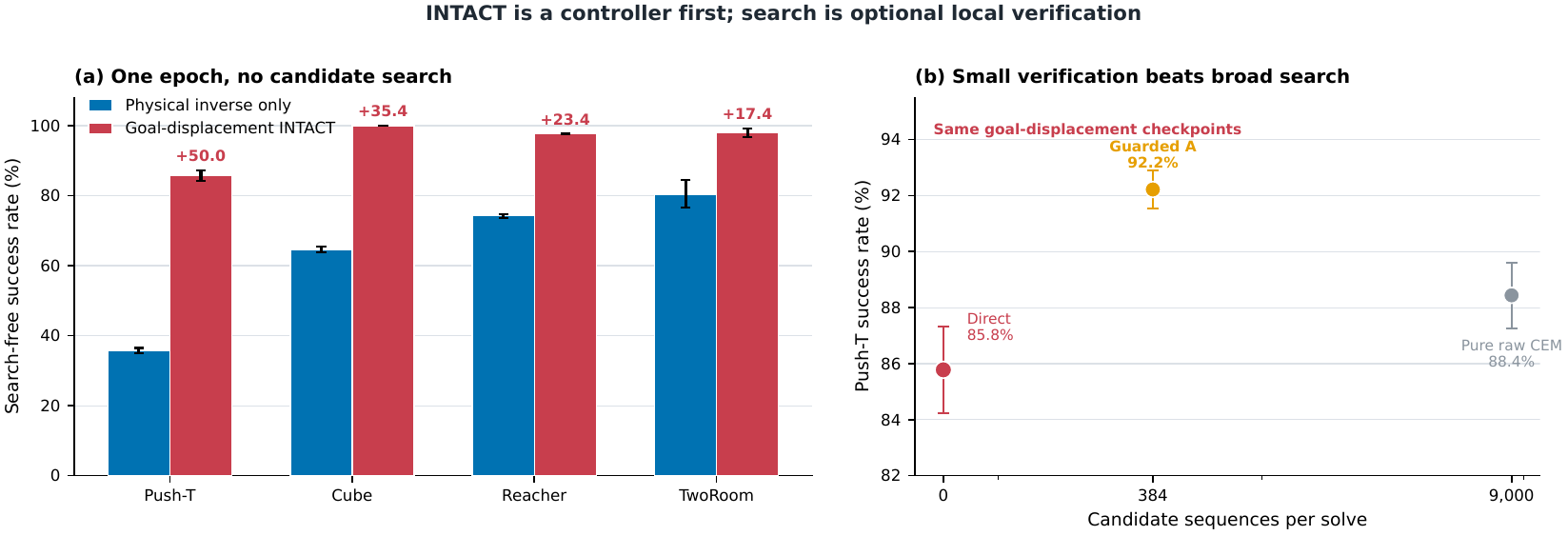}
\caption{\textbf{Search-free control and optional verification.} Left:
physical-successor inversion versus goal-displacement INTACT after one epoch.
Right: Direct uses no candidate sequence, Guarded A locally verifies 384, and
broad actor-disabled CEM evaluates 9,000.  More search is not automatically
better once the action conditional is aligned.}
\label{fig:query_direct}
\end{figure*}

\begin{table}[t]
\centering
\small
\caption{PushT waypoint attribution chain.  Pure CEM disables the
intent-conditioned actor; probes are frozen and episode-disjoint.  The final
row also changes SIGReg and is not an intent-only attribution.}
\label{tab:single_chain}
\begin{adjustbox}{max width=\columnwidth}
\begin{tabular}{lrrrrr}
\toprule
Training & Direct & CEM $30{\times}10$ & State $R^2$ & Action $R^2$ & Rank \\
\midrule
LeWM & -- & $42.22\!\pm\!9.50$ & $.616\!\pm\!.008$ & $.673\!\pm\!.062$ & $63.7\!\pm\!.9$ \\
$+$ physical inverse & $35.78\!\pm\!.69$ & $57.67\!\pm\!2.52$ & $.652\!\pm\!.010$ & $.791\!\pm\!.003$ & $67.0\!\pm\!.8$ \\
$+$ matched waypoint intent & $68.22\!\pm\!1.26$ & $61.44\!\pm\!2.04$ & $.658\!\pm\!.003$ & $.786\!\pm\!.003$ & $67.0\!\pm\!.4$ \\
Waypoint INTACT, SIGReg $0.03$ & \best{$77.67\!\pm\!.88$} & \best{$69.44\!\pm\!1.64$} & \best{$.680\!\pm\!.005$} & \best{$.804\!\pm\!.002$} & $43.7\!\pm\!.6$ \\
\bottomrule
\end{tabular}
\end{adjustbox}
\end{table}

The Forward Predictor remains necessary: weights 1.0/0.3/0.1/0 give PushT
Direct SR 77.67/73.11/59.56/25.22\%.  Increasing the one-epoch peak learning
rate to $7.5\times10^{-4}$ gives $77.00\!\pm\!2.03$\%; a second epoch gives
$77.22\!\pm\!1.07$\%, versus $77.67\!\pm\!.88$ at the selected setting.  The
second epoch raises effective rank from about 43.7 to 58.8 without raising SR,
directly separating geometric spread from action-aligned representation
quality.

\subsection{Complete shared-encoder low-budget result}
\begin{table*}[t]
\centering
\small
\caption{\textbf{Controlled Math-SDPA shared-encoder E1 result.} Values are
mean $\pm$ sample standard deviation over three training seeds; each seed
averages official evaluation seeds $\{0,1,42\}$ with 100 episodes each.  LeWM
has no INTACT Predictor and uses CEM $300{\times}30$; other rows use Direct.}
\label{tab:multitask_core}
\begin{adjustbox}{max width=\textwidth}
\begin{tabular}{llcrrrrr}
\toprule
Training cell & Evaluation & $n$ & PushT & Cube & Reacher & TwoRoom & Macro \\
\midrule
LeWM & CEM $300{\times}30$ & 3 & $17.00\!\pm\!7.17$ & $52.89\!\pm\!3.95$ & \best{$25.00\!\pm\!4.26$} & $34.33\!\pm\!1.76$ & $32.31\!\pm\!2.34$ \\
Inverse only & Direct & 3 & $24.89\!\pm\!5.97$ & $48.22\!\pm\!3.83$ & $9.44\!\pm\!3.01$ & $51.33\!\pm\!13.32$ & $33.47\!\pm\!3.98$ \\
Waypoint intent only & Direct & 3 & $17.67\!\pm\!2.40$ & $68.78\!\pm\!5.50$ & $11.89\!\pm\!2.78$ & $44.67\!\pm\!31.13$ & $35.75\!\pm\!7.58$ \\
Goal intent only & Direct & 3 & $24.00\!\pm\!4.36$ & \best{$73.22\!\pm\!3.67$} & $11.56\!\pm\!2.04$ & $48.22\!\pm\!17.20$ & $39.25\!\pm\!5.40$ \\
\best{Waypoint INTACT} & \best{Direct} & 3 & \best{$30.56\!\pm\!9.34$} & $69.89\!\pm\!10.99$ & $10.33\!\pm\!5.03$ & \best{$60.00\!\pm\!10.58$} & \best{$42.69\!\pm\!7.19$} \\
Goal INTACT & Direct & 3 & $26.89\!\pm\!6.26$ & $61.44\!\pm\!10.46$ & $10.33\!\pm\!3.18$ & $48.67\!\pm\!21.31$ & $36.83\!\pm\!5.63$ \\
\bottomrule
\end{tabular}
\end{adjustbox}
\end{table*}

Waypoint INTACT is the strongest first-pass cell, whereas goal-displacement
INTACT becomes the strongest E5 cell.  This reversal is why the E1 table is a
low-budget optimization diagnostic rather than the final objective ranking.

\subsection{Matched E5 inference decomposition}
\begin{table*}[t]
\centering
\scriptsize
\setlength{\tabcolsep}{4.2pt}
\renewcommand{\arraystretch}{0.96}
\caption{\textbf{Matched E5 inference decomposition.} Values are mean $\pm$
sample standard deviation over three training seeds.  Panel (a) removes all
learned action heads; panel (b) locally verifies the learned plan; panel (c)
uses CLEAR Moderate with each cell's native interface.}
\label{tab:multitask_inference_decomp}

\textbf{(a) Official actor-disabled pure CEM $300{\times}30$}\par\vspace{0.15em}
\begin{adjustbox}{max width=\textwidth}
\begin{tabular}{lrrrrr}
\toprule
Training cell & PushT & Cube & Reacher & TwoRoom & Macro \\
\midrule
LeWM & $74.56\!\pm\!3.67$ & $67.33\!\pm\!1.86$ & $83.11\!\pm\!.96$ & $39.67\!\pm\!8.39$ & $66.17\!\pm\!2.67$ \\
Inverse only & $75.78\!\pm\!1.02$ & $66.78\!\pm\!2.36$ & $81.78\!\pm\!1.64$ & $51.44\!\pm\!1.07$ & $68.94\!\pm\!.43$ \\
Waypoint intent only & $78.44\!\pm\!4.72$ & $68.67\!\pm\!1.76$ & $83.22\!\pm\!2.69$ & $39.11\!\pm\!8.17$ & $67.36\!\pm\!4.06$ \\
Waypoint INTACT & $78.22\!\pm\!3.02$ & $69.56\!\pm\!2.80$ & $83.00\!\pm\!2.19$ & $46.44\!\pm\!3.34$ & $69.31\!\pm\!.68$ \\
Goal intent only & \best{$80.78\!\pm\!2.36$} & $68.00\!\pm\!3.18$ & \best{$84.67\!\pm\!1.33$} & $40.11\!\pm\!8.30$ & $68.39\!\pm\!2.22$ \\
\best{Goal INTACT} & $78.11\!\pm\!1.17$ & \best{$70.33\!\pm\!3.00$} & $80.33\!\pm\!2.03$ & \best{$51.56\!\pm\!4.53$} & \best{$70.08\!\pm\!1.13$} \\
\bottomrule
\end{tabular}
\end{adjustbox}

\vspace{0.25em}
\textbf{(b) Official Guarded A $128{\times}3$}\par\vspace{0.15em}
\begin{adjustbox}{max width=\textwidth}
\begin{tabular}{lrrrrr}
\toprule
Training cell & PushT & Cube & Reacher & TwoRoom & Macro \\
\midrule
Inverse only & $55.67\!\pm\!4.10$ & $65.89\!\pm\!1.02$ & $78.67\!\pm\!1.20$ & $66.33\!\pm\!6.08$ & $66.64\!\pm\!.88$ \\
Waypoint intent only & $74.67\!\pm\!3.79$ & \best{$99.78\!\pm\!.19$} & $87.78\!\pm\!1.07$ & $74.78\!\pm\!10.63$ & $84.25\!\pm\!3.58$ \\
Waypoint INTACT & $83.22\!\pm\!2.36$ & $99.11\!\pm\!.51$ & $85.33\!\pm\!.58$ & $74.89\!\pm\!7.15$ & $85.64\!\pm\!2.04$ \\
Goal intent only & $80.44\!\pm\!.96$ & $99.33\!\pm\!.33$ & $94.67\!\pm\!.33$ & $69.33\!\pm\!5.93$ & $85.94\!\pm\!1.57$ \\
\best{Goal INTACT} & \best{$85.00\!\pm\!2.85$} & $99.00\!\pm\!.00$ & \best{$97.89\!\pm\!.69$} & \best{$80.00\!\pm\!4.41$} & \best{$90.47\!\pm\!.84$} \\
\bottomrule
\end{tabular}
\end{adjustbox}

\vspace{0.25em}
\textbf{(c) CLEAR-LeWM v0.5.1 Moderate}\par\vspace{0.15em}
\begin{adjustbox}{max width=\textwidth}
\begin{tabular}{lrrrrr}
\toprule
Training cell & PushT & Cube & Reacher & TwoRoom & Macro \\
\midrule
LeWM (pure CEM) & $78.78\!\pm\!2.52$ & $51.33\!\pm\!1.20$ & $46.00\!\pm\!4.06$ & $33.33\!\pm\!13.86$ & $52.36\!\pm\!2.97$ \\
Inverse only & $32.67\!\pm\!.88$ & $42.67\!\pm\!1.67$ & $31.67\!\pm\!2.33$ & $70.11\!\pm\!2.17$ & $44.28\!\pm\!1.33$ \\
Waypoint intent only & $60.89\!\pm\!2.91$ & \best{$99.78\!\pm\!.19$} & $35.78\!\pm\!7.58$ & $67.11\!\pm\!12.01$ & $65.89\!\pm\!2.79$ \\
Waypoint INTACT & $74.78\!\pm\!4.34$ & $99.11\!\pm\!.84$ & $32.22\!\pm\!5.43$ & $73.33\!\pm\!3.28$ & $69.86\!\pm\!1.06$ \\
Goal intent only & $73.00\!\pm\!3.18$ & \best{$100.00\!\pm\!.00$} & $39.44\!\pm\!2.71$ & $60.67\!\pm\!5.51$ & $68.28\!\pm\!1.17$ \\
\best{Goal INTACT} & \best{$84.33\!\pm\!2.00$} & $99.11\!\pm\!1.54$ & \best{$47.44\!\pm\!2.91$} & \best{$79.33\!\pm\!5.03$} & \best{$77.56\!\pm\!.39$} \\
\bottomrule
\end{tabular}
\end{adjustbox}
\end{table*}

\subsection{Representation and conversion diagnostics}
\begin{table*}[t]
\centering
\scriptsize
\caption{\textbf{Controlled shared-encoder representation snapshots.}
$\bar\sigma$ is mean per-dimension latent standard deviation, cosine is mean
pairwise cosine, and action $R^2$ is an episode-disjoint frozen probe.}
\label{tab:multitask_diagnostics}
\label{tab:multitask_alignment}
\begin{minipage}[t]{0.495\textwidth}
\centering
\textbf{E1}\par\vspace{0.1em}
\begin{tabular}{lcrrrr}
\toprule
Training cell & $n$ & Rank & $\bar\sigma$ & Cos. & Act. $R^2$ \\
\midrule
LeWM & 3 & 25.53 & .830 & .554 & .309 \\
Inverse only & 3 & 24.03 & .902 & .590 & .372 \\
Waypoint intent only & 3 & 24.83 & .846 & .582 & .328 \\
Goal intent only & 3 & 24.45 & .839 & .604 & .349 \\
Waypoint INTACT & 3 & \best{25.73} & .877 & .587 & \best{.383} \\
Goal INTACT & 3 & 25.34 & .888 & .616 & .382 \\
\bottomrule
\end{tabular}
\end{minipage}\hfill
\begin{minipage}[t]{0.495\textwidth}
\centering
\textbf{E5}\par\vspace{0.1em}
\begin{tabular}{lcrrrr}
\toprule
Training cell & $n$ & Rank & $\bar\sigma$ & Cos. & Act. $R^2$ \\
\midrule
LeWM & 3 & \best{94.06} & .964 & .039 & .366 \\
Inverse only & 3 & 85.24 & .956 & .087 & .401 \\
Waypoint intent only & 3 & 93.87 & .959 & .049 & .393 \\
Goal intent only & 3 & 93.47 & .965 & .047 & .390 \\
Waypoint INTACT & 3 & 87.17 & .960 & .080 & .406 \\
Goal INTACT & 3 & 89.26 & .958 & .081 & \best{.409} \\
\bottomrule
\end{tabular}
\end{minipage}
\end{table*}

Across 45 formula-eligible checkpoints, pooled predicted--expert CKA/kNN
correlations with official Direct SR are $.897/.954$; pointwise action $R^2$
is $.815$ and NLL is $-.786$.  Leave-one-epoch-out values remain
$[.888,.914]$ for CKA and $[.946,.963]$ for kNN.  These analyses use one point
per checkpoint, not one point per transition.  Intent-only, inverse-only, and
LeWM cells are excluded because a transition must evaluate both local and goal
likelihoods through the same actor to instantiate the claimed family map.

Fixed-sample t-SNE is retained only as a qualitative diagnostic.  Under
SIGReg, healthy high-dimensional clouds can project to an interleaved shell;
this appearance neither computes nor ranks SR.  The quantitative analyses use
original-space rank, standard deviation, forward error, probes, and
predicted--expert action-family relations.

\subsection{Task-conditioned gauge controls}
Correct pairing raises CLEAR Moderate SR from 9.46\% shuffled to 68.04\%, a
$+58.58$ point gain (8-cluster bootstrap 95\% CI $[48.83,67.58]$), positive
for all 24 protocol units.  Calibration with 4/16/64/full episodes gives
26.67/56.00/64.42/64.83\% SR.  Same-objective cross-seed swaps define the
alignment ceiling; LeWM and inverse backbones aligned into a trained Full actor
are positive alignability controls.  In reverse, Full-backbone-to-LeWM-actor
stays at 5.08\%, and the inverse actor recovers only inverse-level function.

\begin{figure*}[t]
\centering
\includegraphics[width=0.995\textwidth]{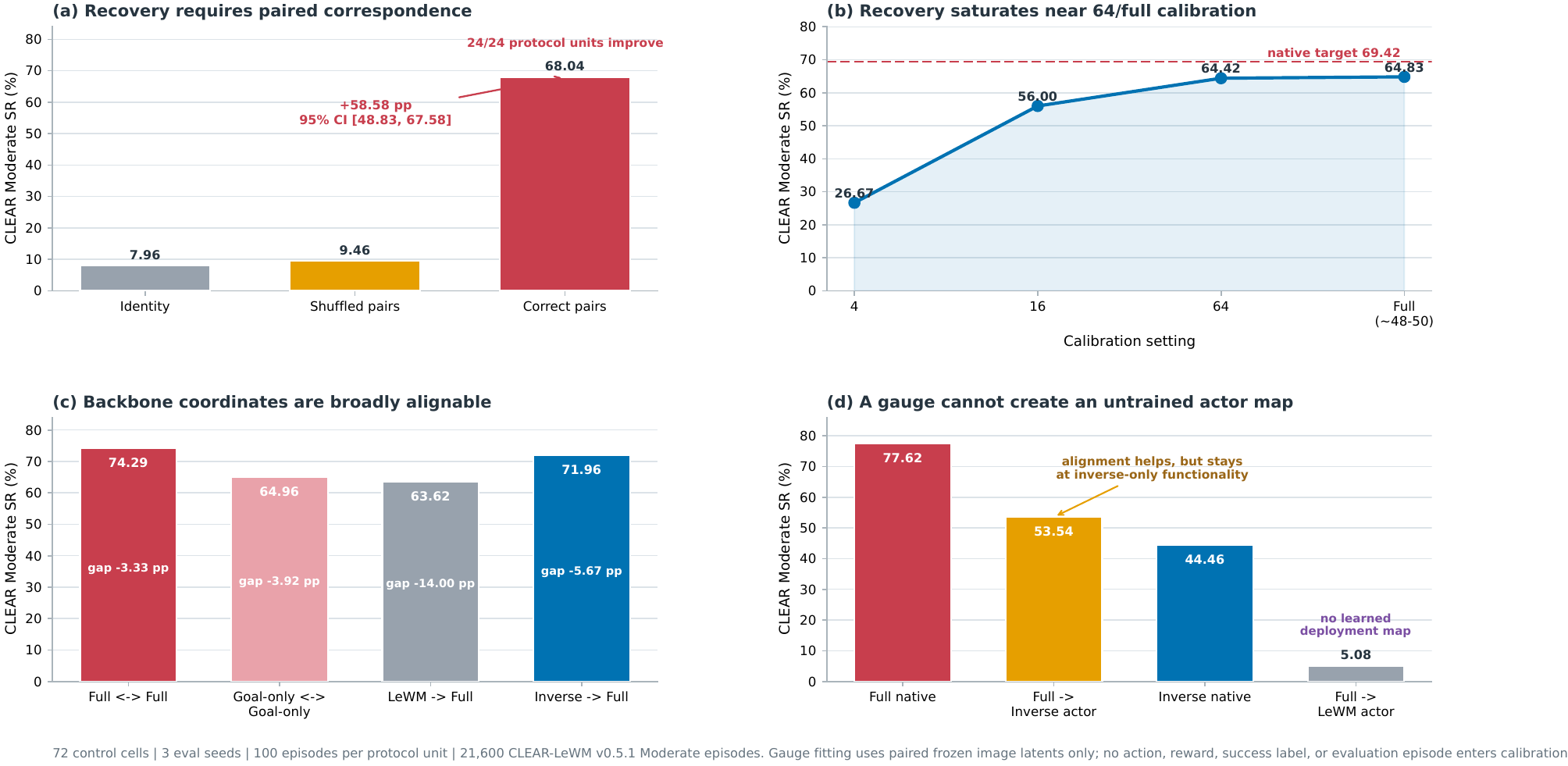}
\caption{\textbf{Gauge controls isolate correspondence from actor
functionality.} Correctly paired calibration restores control; recovery
saturates near 64/full episodes; same-objective swaps define a ceiling; reverse
swaps show that a coordinate map cannot create an untrained deployment
conditional.  The 72-cell audit contains 21,600 CLEAR Moderate episodes and
excludes all evaluation episodes from calibration.}
\label{fig:task_conditioned_gauge}
\end{figure*}

Cross-task and leave-one-task-out actor-output transfer remain negative, while
pooled calibration reaches only $.484/.539$ $R^2$.  The shared visual backbone
can therefore support several task-local charts; the task-specific heads do not
require one global canonical action space.

\section{Actor-Sharing Mechanism Audit}
\label{app:actor_sharing}
\begin{table*}[t]
\centering
\small
\caption{\textbf{Sharing is the operative inductive bias.} Matched PushT
waypoint Direct SR (\%).  Except for the explicit double-capacity control,
parameter counts match within 0.12\%.  $p$ is an episode-paired exact McNemar
comparison with the fully shared model.}
\label{tab:actor_sharing}
\begin{adjustbox}{max width=\textwidth}
\begin{tabular}{lrrrr}
\toprule
Action-conditional design & Params (M) & Direct SR & $\Delta$ vs shared & Paired $p$ \\
\midrule
Waypoint intent only & 3.110 & $65.56\!\pm\!1.35$ & $-11.33$ & $4.23\!\times\!10^{-15}$ \\
Independent, parameter matched & 3.114 & $71.22\!\pm\!2.27$ & $-5.67$ & $1.15\!\times\!10^{-5}$ \\
Independent, double capacity & 6.220 & $72.89\!\pm\!2.17$ & $-4.00$ & 0.00362 \\
Shared trunk, two outputs & 3.110 & $74.67\!\pm\!1.53$ & $-2.22$ & 0.0927 \\
Condition-token shared & 3.111 & $77.11\!\pm\!2.12$ & $+0.22$ & 0.923 \\
\best{Fully shared (default)} & 3.110 & $76.89\!\pm\!.69$ & 0 & -- \\
\bottomrule
\end{tabular}
\end{adjustbox}
\end{table*}

The closed-loop result is paired with mechanism measurements on fixed PushT waypoint batches.
``Shared fraction'' is the fraction of action-network parameters touched by
both local and deployment losses. Gradient cosine is computed in the complete
action-network parameter coordinates; it is zero by construction when the two
actors are disjoint. Shuffle response is the normalized-action MAE between the
original goal-conditioned output and the output after permuting goals across
the batch. Consistency is
$\|F_\phi(z_t,\mu_\eta(z_t,q_t))-q_t\|_2^2$.

\begin{table*}[t]
\centering
\small
\caption{Mechanism diagnostics for the matched PushT actor-sharing ablation.
These measurements explain the closed-loop table but are not substitutes for
SR.}
\label{tab:actor_sharing_diagnostics}
\begin{adjustbox}{max width=\textwidth}
\begin{tabular}{lrrrr}
\toprule
Action-conditional design & Shared fraction & Grad. cosine & Shuffle-goal MAE & Consistency MSE \\
\midrule
Condition-token shared & 1.000 & 0.506 & 1.256 & 0.1320 \\
Fully shared & 1.000 & \best{0.575} & \best{1.263} & 0.1275 \\
Shared trunk, two outputs & 0.987 & 0.138 & 1.200 & 0.1309 \\
Independent, double capacity & 0 & 0 & 1.030 & 0.1382 \\
Independent, parameter matched & 0 & 0 & 1.029 & 0.1370 \\
Goal-intent only & 1.000 & 0.321 & 0.889 & \best{0.1133} \\
\bottomrule
\end{tabular}
\end{adjustbox}
\end{table*}

The fully shared model receives positively aligned physical/deployment
gradients (cosine 0.575). As sharing is weakened, gradient coordination and SR
decline together. Across the six designs, shuffle-goal response has exploratory
Spearman $\rho=0.943$ with SR ($n=6$), indicating that stronger controllers use
the goal condition more actively; the small design count precludes a universal
claim. Two negative controls are especially informative. Goal-intent-only has
the lowest consistency MSE but the worst SR, so self-consistency with the
Forward Predictor does not establish physical correctness. Conversely, the
double-capacity independent model obtains real/deployment NLLs of
$-1.347/-1.160$ yet reaches only 72.89\% SR. Separately fitting both likelihoods
is therefore not equivalent to learning one deployable inverse map.

\section{Exploratory Diagnostic Definitions and Correlations}
\label{app:diagnostics}

For centered latent samples $Z\in\R^{n\times D}$ with normalized covariance
eigenvalues $\bar\lambda_j$, entropy effective rank is
\begin{equation}
d_{\mathrm{eff}}=\exp\!\left(-\sum_j\bar\lambda_j
\log(\bar\lambda_j+\epsilon)\right).
\end{equation}
Let $r_i=\lVert Z_i-\bar Z\rVert_2$, let $\mathrm{CV}_r$ be the coefficient of
variation of these radii, and let $\mathrm{CV}_{\chi(d)}$ be that of a chi
distribution with $d$ degrees of freedom. We define
\begin{align}
S_{\rm rank}&=d_{\rm eff}/D,\\
S_{\rm shell}&=\exp\!\left[-\left|\log
\frac{\mathrm{CV}_r}{\mathrm{CV}_{\chi(d_{\rm eff})}}\right|\right],\\
\mathrm{SRS}&=100\sqrt{S_{\rm rank}S_{\rm shell}}.
\label{eq:srs_definition}
\end{align}
SRS is high when the spectrum uses many dimensions and sample radii resemble
the Gaussian shell expected at that effective dimension; it does not measure
what information those dimensions encode.

For the actor diagnostic, write $(\mu_t^{\rm dep},\sigma_t^{\rm dep})$ for the
Gaussian under the deployed goal condition and
$(\widetilde\mu_t,\widetilde\sigma_t)$ for a deterministic different-goal
condition on the same current state. The reported condition response is the
per-sample RMS standardized mean shift
\begin{equation}
G_{\rm sep}=\mathbb E_t\!\left[
\left(\frac{1}{d_a}\sum_{j=1}^{d_a}
\frac{(\mu_{t,j}^{\rm dep}-\widetilde\mu_{t,j})^2}
{\tfrac12[(\sigma_{t,j}^{\rm dep})^2+
\widetilde\sigma_{t,j}^{2}]+\epsilon}\right)^{1/2}\right].
\label{eq:gsep_definition}
\end{equation}
Thus $G_{\rm sep}$ asks whether the actor uses its condition relative to its
own predictive uncertainty. It does not ask whether the changed goal requires
a different expert action, nor whether the response has the correct direction.

We pair rank, SRS, and $G_{\rm sep}$ with forward MSE, deployment NLL/RMSE,
symmetric KL between action conditionals, and shuffled-goal $\Delta$NLL. Every
method uses the same 64 sequences per task. Correlations first subtract the
mean within each task and then pool the 56 method--task cells.
These endpoint runs predate the controlled Math-SDPA factorial. Their matched
sampling makes the geometric and conversion metrics informative as exploratory
mechanism diagnostics, but their SR values are not used to rank training
objectives in the main performance comparison.

\begin{table}[t]
\centering
\small
\caption{Task-centered Pearson correlation with E5 Direct SR. These are
diagnostic associations, not independent causal effects.}
\label{tab:metric_correlations}
\begin{tabular}{lrr}
\toprule
Metric & Moderate & Strict \\
\midrule
Effective rank & 0.888 & 0.846 \\
$G_{\mathrm{sep}}$ & 0.819 & 0.731 \\
Deployment NLL & $-0.640$ & $-0.766$ \\
\bottomrule
\end{tabular}
\end{table}

SRS correlates with CLEAR v0.3 E5 Moderate Direct SR at Pearson $r=0.978$
and Spearman $\rho=0.652$, and with Strict Direct SR at $r=0.977$ and
$\rho=0.809$, over the ten semantic endpoint configurations. This smaller analysis is
useful for geometry auditing but is not treated as a universal threshold.

\section{Single-Task Latent-Linearity Stress Test}
\label{app:linearity_control}

\Cref{tab:linearity_control} is a separately trained four-frame PushT isolation,
not a row from the controlled multi-task factorial. Every model uses full data for one
epoch, learning rate $10^{-4}$, batch size 256, physical-inverse weight 0.1,
and SIGReg 0.09; only the latent-linearity term changes. Each cell pools three
training seeds (3072--3074), three evaluation seeds (0, 1, 42), and 100 episodes
per pair, for 900 episodes. The flexible penalty permits a learned local
adjustment, whereas the strict penalty directly targets Euclidean waypoints.
Machine-readable values come from experiment
\nolinkurl{linearity4_exact_isolation_pusht_e1}, specifically the Direct and
pure-CEM summary artifacts.

\begin{table}[t]
\centering
\small
\caption{Separate matched PushT E1 latent-linearity stress test, official SR
(\%). The control is physical-inverse LeWM with no linearity loss, not
forward-only LeWM.}
\label{tab:linearity_control}
\begin{adjustbox}{max width=\columnwidth}
\begin{tabular}{lrrr}
\toprule
Linearity term & Weight & Direct & Pure CEM $30{\times}10$ \\
\midrule
Inverse control (no $\mathcal L_{\rm lin}$) & 0 & 34.00 & \best{43.78} \\
Flexible & 0.01 & \best{35.89} & 41.11 \\
Strict & 0.01 & 35.11 & 41.22 \\
Strict & 0.03 & 35.11 & 41.78 \\
Flexible & 0.30 & 14.11 & -- \\
\bottomrule
\end{tabular}
\end{adjustbox}
\end{table}

The strict losses reduce latent waypoint error but do not improve either
Direct or actor-disabled planning. At flexible weight 0.30, effective rank
falls to 41.96 and the frozen action probe falls to $R^2=0.111$. This negative
control is why INTACT uses the Euclidean construction only to form an actor
condition and never as a target for the encoder trajectory.
Its 34.00\% control should not be compared numerically to the 35.78\%
single-task physical-inverse row in \cref{tab:single_direct}: the stress test
uses an independently trained seed set and a four-frame isolation protocol.

\section{Historical Latent-Perturbation Diagnostic}
\label{app:latent_diffusion}

Before INTACT, we tested whether replacing the deterministic latent predictor
with five-step flow matching improved LeWM. On PushT, the deterministic model
reached 26\%, 86\%, 94\%, and 96\% official SR at E1, E5, E10, and E15. The
best x-prediction/loss-x flow model reached 4\%, 76\%, 88\%, and 92\%; a second
flow run with matched SIGReg 0.09 reached 4\%, 44\%, 72\%, and 76\%. At matched
139,319 optimizer steps, the strongest flow run used 0.568 seconds/step versus
0.479 for the deterministic predictor, an 18.4\% increase.

This historical control is deliberately not a headline ablation: the strongest
flow run changes SIGReg from 0.09 to 0.03, and there is no matched RGB-noise
control. It supports only the narrower observation that perturbing an already
compressed latent did not improve low-budget learning in these runs. This is
consistent with GC-IDM's independent latent-noise ablation, whose taskwise
optimum occurs at zero noise \citep{nguyen2026gcidm}. It does not establish that
latent diffusion is universally harmful or that it is directly comparable to
pixel-space diffusion.

\section{Complete Goal-Displacement Planner Matrix}
\label{app:planner_matrix}

\Cref{tab:planner_matrix_full} reports every completed planner cell used in
\cref{fig:planner_matrix}. Each method contains 36 jobs: three training
checkpoints per task, each evaluated with seeds $\{0,1,42\}$ and 100 episodes.
Macro is the unweighted task mean. The 12 base rules contribute 432 jobs and
the four guarded rules contribute 144, each under a separate PASS audit.

\begin{table*}[t]
\centering
\scriptsize
\caption{Complete 576-job MATH-only planner matrix, official SR (\%). ``Plan''
means that the deterministic goal-displacement action sequence initializes the
factorized raw-action distribution; trust variants use coefficient one.
Guarded rules preserve the Direct reference and globally best observed
sequence across refits.}
\label{tab:planner_matrix_full}
\begin{adjustbox}{max width=\textwidth}
\begin{tabular}{llrrrrrr}
\toprule
Planner family & Setting & PushT & Cube & Reacher & TwoRoom & Macro & Worst \\
\midrule
Direct & Goal-displacement mean & 85.78 & \best{100.00} & \best{97.67} & 97.89 & 95.33 & 85.78 \\
Actor particles & K128-I1 & 91.44 & 99.67 & 82.78 & 98.00 & 92.97 & 82.78 \\
Actor particles & K128-I3 & 91.89 & 99.67 & 78.44 & 97.44 & 91.86 & 78.44 \\
\midrule
Plan + raw CEM & $\sigma_0=0.05$ & 87.89 & \best{100.00} & 96.33 & 97.78 & 95.50 & 87.89 \\
Plan + raw CEM & $\sigma_0=0.10$ & 89.78 & \best{100.00} & 97.11 & 97.67 & 96.14 & 89.78 \\
Plan + raw CEM & $\sigma_0=0.25$ & 92.33 & 99.78 & 97.11 & 97.89 & 96.78 & 92.33 \\
Plan + raw CEM & $\sigma_0=0.50$ & 92.11 & 99.78 & 93.78 & 97.89 & 95.89 & 92.11 \\
Plan + raw CEM & $\sigma_0=1.00$ & 88.78 & 99.56 & 84.78 & \best{98.33} & 92.86 & 84.78 \\
Plan + L2 trust & $\sigma_0=0.50$ & \best{92.78} & 99.78 & 94.11 & 97.89 & 96.14 & \best{92.78} \\
Plan + actor-NLL trust & $\sigma_0=0.50$ & 92.11 & 99.89 & 94.22 & 97.89 & 96.03 & 92.11 \\
\midrule
Guarded B & K256-I3, $\sigma_0=0.25$ & 92.00 & 99.89 & \best{97.67} & 97.78 & 96.83 & 92.00 \\
Guarded C & K128-I5, smooth & 91.56 & 99.89 & 97.56 & 97.89 & 96.72 & 91.56 \\
Guarded D & K128-I3, actor-cov + L2 & 91.67 & 99.89 & 97.22 & 98.00 & 96.69 & 91.67 \\
\best{Guarded A} & \best{K128-I3, $\sigma_0=0.25$} & \best{92.22} & \best{99.78} & \best{97.44} & \best{98.00} & \best{96.86} & \best{92.22} \\
\midrule
Pure raw CEM & K128-I3 & 58.00 & 65.44 & 73.44 & 80.56 & 69.36 & 58.00 \\
Pure raw CEM & K300-I30 & 88.44 & 68.44 & 83.67 & 82.89 & 80.86 & 68.44 \\
\bottomrule
\end{tabular}
\end{adjustbox}
\end{table*}

\section{Episode-Disjoint Anti-Retrieval Audit}
\label{app:episode_disjoint}
The fixed PushT manifest contains 14,948/1,868/1,869 train/validation/test
episodes with zero overlap. Normalization, optimization, and checkpoint
selection exclude the test episodes. We retrain transition-only and full
goal-intent models, then evaluate the same 100 test-only start--goal pairs in
each of three training-seed by three evaluation-seed cells.

\begin{table}[t]
\centering
\small
\caption{PushT Direct SR (\%) on the held-out 10\% episode split. Each cell
contains 100 paired test-only episodes.}
\label{tab:episode_disjoint}
\begin{tabular}{lrrrr}
\toprule
Training objective & eval 0 & eval 1 & eval 42 & pooled \\
\midrule
Transition inverse & 35.67 & 44.33 & 40.67 & 40.22 \\
+ deployment goal intent & 75.33 & 81.33 & 76.00 & \textbf{77.56} \\
\bottomrule
\end{tabular}
\end{table}

The full model reaches \EpisodeQuerySR{}\%, only 0.12 point below the 77.67\%
standard-protocol estimate. Among 900 paired episodes, 358 are solved only by
the goal-intent model and 22 only by transition inverse (descriptive exact
McNemar $p=2.36\times10^{-79}$; checkpoint clustering precludes treating all
episodes as independent). Every training-seed mean improves:
39.33$\rightarrow$77.00, 40.33$\rightarrow$80.33, and
41.00$\rightarrow$75.33. Thus the Direct gain survives both unseen expert
episodes and independent optimization runs rather than retrieving a training
episode.

\begin{table}[t]
\centering
\small
\caption{PushT successes on held-out expert episodes (100 per cell).}
\label{tab:episode_disjoint_cells}
\begin{tabular}{lrrrrrr}
\toprule
& \multicolumn{3}{c}{Transition inverse} & \multicolumn{3}{c}{Goal intent} \\
\cmidrule(lr){2-4}\cmidrule(lr){5-7}
Training seed & e0 & e1 & e42 & e0 & e1 & e42 \\
\midrule
3072 & 35 & 42 & 41 & 74 & 83 & 74 \\
3073 & 36 & 44 & 41 & 81 & 81 & 79 \\
3074 & 36 & 47 & 40 & 71 & 80 & 75 \\
\bottomrule
\end{tabular}
\end{table}

\section{Evaluation Protocol Scope}
\label{app:clear_scope}

The official LeWM benchmark remains the headline protocol because it preserves
comparability with prior work. CLEAR-LeWM is a separate, versioned, MIT-licensed
software benchmark rather than a contemporaneously reviewed paper; we cite the
public v0.5.1 release and immutable commit as a software artifact
\citep{sun2026clearlewm}. Official and CLEAR scores are never pooled. The
INTACT audit pins v0.5.1 commit
\texttt{32f4416c}, canonical manifests, batch
one, Math-only SDPA, deterministic algorithms, full checkpoint hashes, and
complete episode outcomes.

\subsection{Why a corrected protocol is informative}
The historical sampler draws rows uniformly and retains initially successful
start--goal pairs. This creates both episode-length bias and automatic-success
mass. Across the released datasets, 38.38\% of valid Cube pairs and 8.82\% of
TwoRoom $+25$ pairs are already within the historical endpoint threshold.
Random control consequently reaches 49\% and 26\%, respectively, in the
three-run historical audit. Moderate removes initially solved pairs, samples
episodes uniformly, restores the final valid row, and fixes only demonstrated
evaluator defects. This lowers the Cube and TwoRoom random floors to 15.67\%
and 6.67\% while retaining strong official-checkpoint performance
(\cref{tab:clear_floor}). A lower floor increases usable dynamic range; it does
not guarantee that every model score must decrease because the sampled pair
distribution also changes.

\begin{table}[t]
\centering
\scriptsize
\caption{Historical automatic-success audit and CLEAR Moderate calibration.
Historical random is the mean of three runs under the released stack. CLEAR
values use canonical seeds 0/1/42 and 100 episodes per seed.}
\label{tab:clear_floor}
\begin{tabular}{lrrr}
\toprule
Task & Initially solved & Historical random & Moderate random \\
\midrule
PushT & 0.22 & 4.00 & 4.00 \\
Cube & 38.38 & 49.00 & 15.67 \\
Reacher & 0.57 & 13.00 & 4.33 \\
TwoRoom & 8.82 & 26.00 & 6.67 \\
\bottomrule
\end{tabular}
\end{table}

\subsection{Moderate and Strict are different claims}
Moderate is the closest corrected continuation of the released task. Strict
builds on the same data hygiene and physics but asks for precise task-semantic
completion. They are reported separately rather than averaged
(\cref{tab:clear_contract}).

\begin{table}[t]
\centering
\tiny
\caption{CLEAR-LeWM v0.5.1 task contracts. ``Hold'' is the number of consecutive
successful environment steps. The 24-fold Cube comparison minimizes orientation
error over the proper rotational symmetry group of a cube.}
\label{tab:clear_contract}
\setlength{\tabcolsep}{2pt}
\begin{tabular}{p{0.12\columnwidth}p{0.38\columnwidth}p{0.40\columnwidth}}
\toprule
Task & Moderate: minimal compatibility correction & Strict: task-semantic precision \\
\midrule
PushT & Released pusher + T position $<20$ px and T angle $<20^\circ$; first hit. & T object only, $<10$ px and $<10^\circ$; hold 3. \\
Cube & Initially unsolved, cube-center distance $\leq4$ cm; first hit. & Cube center $\leq3$ cm and symmetry-aware orientation $\leq15^\circ$; hold 3. \\
Reacher & Periodic unbounded shoulder, raw bounded wrist, joint error $<0.05$ rad; first hit. & Physical fingertip endpoint $\leq1$ cm; hold 2. \\
TwoRoom & Clean cross-room pair, continuous swept-disk collision, endpoint $<16$ px. & Legal doorway crossing, valid route, goal-side arrival, endpoint $<8$ px. \\
\bottomrule
\end{tabular}
\end{table}

The task-specific repairs matter. Reacher's released all-periodic comparison
can wrap its physically bounded wrist coordinate; Moderate wraps only the
unbounded shoulder. The rewritten TwoRoom runtime previously checked requested
segment endpoints and could admit motion through a wall. Both modes now use
continuous swept-disk collision for the full agent radius; Strict additionally
requires a legal cross-room route and goal-side arrival. Cube Moderate retains
the OGBench 4 cm position task so that compatibility is explicit, whereas
Strict adds object orientation modulo 24 proper cube rotations. PushT Moderate
retains the released full pusher--object goal state; Strict removes the
task-irrelevant terminal pusher pose and scores precise T-object completion.

\subsection{Pinned references and public checkpoint bundles}
\Cref{tab:clear_public} summarizes the publicly auditable reference set;
INTACT has completed Moderate only, so no Strict score is imputed.

\begin{table}[!ht]
\centering
\scriptsize
\caption{Public CLEAR-LeWM v0.5.1 results. Each task cell is Moderate/Strict SR
(\%). INTACT uses audited Direct inference; official LeWM uses pinned CEM
$300{\times}30$. A dash denotes an unsubmitted mode. INTACT Moderate averages
three training seeds, each over canonical evaluation seeds 0/1/42; official
LeWM and random average those three evaluation seeds; community rows are
self-reported CEM seed-42 bundles and did not submit Reacher.}
\label{tab:clear_public}
\begin{adjustbox}{max width=\columnwidth}
\begin{tabular}{lrrrr}
\toprule
Model & PushT & Cube & Reacher & TwoRoom \\
\midrule
INTACT goal, Direct & 88.22/-- & 99.89/-- & 49.56/-- & 95.67/-- \\
Official LeWM & 86.33/70.33 & 50.33/26.33 & 46.00/43.00 & 84.00/58.33 \\
Paired random & 4.00/5.00 & 15.67/6.00 & 4.33/5.00 & 6.67/1.67 \\
DINOv2 No-Proprio & 8/7 & 43/17 & --/-- & 55/26 \\
GCBC Joint & 9/9 & 16/3 & --/-- & 15/9 \\
\bottomrule
\end{tabular}
\end{adjustbox}
\end{table}

The comparison illustrates the division of labor between protocols. The
official benchmark supplies historical continuity. Moderate removes automatic
success and known implementation defects while changing as little task
semantics as possible. Strict supports a stronger physical-completion claim.
For single-task INTACT Direct, Moderate preserves high PushT/Cube/TwoRoom
performance but changes Reacher from 97.67\% official SR to 49.56\%; this is exactly the kind of
near-ceiling ambiguity that the corrected topology is intended to reveal.

\end{document}